\documentclass[11pt]{article}
\PassOptionsToPackage{nopatch=footnote}{microtype}
\usepackage[letterpaper,margin=1in]{geometry}
\usepackage[utf8]{inputenc}      
\usepackage[T1]{fontenc}
\usepackage{lmodern}
\usepackage{microtype}

\usepackage[english]{babel}
\usepackage{indentfirst}

\usepackage{amsmath,amssymb,amsthm,mathtools}

\usepackage{graphicx}
\usepackage{booktabs}
\usepackage{caption}
\usepackage{subcaption}

\RequirePackage{tikz}
\usetikzlibrary{arrows.meta,positioning,fit,calc}
\usetikzlibrary{positioning,fit,calc,decorations.pathreplacing, backgrounds}
\usepackage{adjustbox}
\usetikzlibrary{arrows.meta,positioning,calc,shadows.blur}

\usepackage[hidelinks]{hyperref}
\usepackage[nameinlink,noabbrev]{cleveref}

\usepackage{listings}
\usepackage{enumitem}

\usepackage{tabularx}

\usepackage{xurl} 

\usepackage{algorithm}
\usepackage{algpseudocode} 

\theoremstyle{plain}
\newtheorem{theorem}{Theorem}[section]

\theoremstyle{definition}

\theoremstyle{remark}

\newtheorem{proposition}[theorem]{Proposition}
\newtheorem{architecture}{Architecture}

\usepackage[alf]{abntex2cite}

\usepackage{ragged2e}
\usepackage{etoolbox}

\makeatletter
\patchcmd{\thebibliography}{\raggedright}{}{}{}
\patchcmd{\thebibliography}{\RaggedRight}{}{}{}
\patchcmd{\thebibliography}{\sloppy}{}{}{}

\apptocmd{\thebibliography}{%
  \justifying
  \fussy
  \setlength{\emergencystretch}{2em}%
}{}{}
\makeatother

\title{From Shallow Bayesian Neural Networks to Gaussian Processes: General Convergence, Identifiability and Scalable Inference}
\author{
  Gracielle Antunes de Araújo\thanks{Universidade do Estado de Minas Gerais (UEMG), FaEnge -- Faculdade de Engenharia. \texttt{gracielle.araujo@uemg.br}} \and
  Flávio B. Gonçalves\thanks{Universidade Federal de Minas Gerais (UFMG), ICEx -- Instituto de Ciências Exatas. \texttt{fbgoncalves@ufmg.br}}
}
\date{} 

\begin{document}
\maketitle

\begin{abstract}
In this work, we study scaling limits of shallow Bayesian neural networks (BNNs) via their connection to Gaussian processes (GPs), with an emphasis on statistical modeling, identifiability, and scalable inference. We first establish a general convergence result from BNNs to GPs by relaxing assumptions used in prior formulations, and we compare alternative parameterizations of the limiting GP model. Building on this theory, we propose a new covariance function defined as a convex mixture of components induced by four widely used activation functions, and we characterize key properties including positive definiteness and both strict and practical identifiability under different input designs. For computation, we develop a scalable maximum a posterior (MAP) training and prediction procedure using a Nyström approximation, and we show how the Nyström rank and anchor selection control the cost-accuracy trade-off. Experiments on controlled simulations and real-world tabular datasets demonstrate stable hyperparameter estimates and competitive predictive performance at realistic computational cost.
\end{abstract}

\section{Introduction}

Mobile devices and connected systems continuously generate large volumes of complex and weakly structured data, fostering the use of machine learning models in prediction and decision-making tasks. In this setting, neural networks (NNs) stand out for their strong empirical performance, yet their increasing complexity often reduces interpretability and hampers rigorous uncertainty quantification—an issue that becomes critical in applications requiring calibration, robustness, and reliability \cite{lecun2015deep,Goodfellow-et-al-2016}.

A natural way to incorporate uncertainty is Bayesian modeling, which treats weights and biases as random variables and yields predictive distributions equipped with explicit measures of uncertainty. For shallow BNNs, classical results show that, under standard initialization and scaling assumptions, the infinite-width regime converges to a GP whose kernel is determined by the activation function and the prior distributions over the parameters \cite{neal1996bayesian,williams1997computing}. This connection provides a useful statistical bridge between NN models and stochastic processes, but it leaves open practical choices in modeling and inference when one seeks both kernel flexibility and computational tractability at scale.

From a modeling perspective, much of the literature still focuses on isolated activations or restricted families, which limits the diversity of limiting covariance structures and may constrain the ability to capture, in an interpretable way, combined effects of smoothness and angular behavior commonly observed in real data \cite{lee2017deep,novak2018bayesian}. From a computational perspective, even with an expressive kernel, exact GP inference scales cubically in time and quadratically in memory with the sample size, which makes direct use prohibitive for large datasets \cite{rasmussen2006gaussian}. Thus, exploiting the network--process equivalence brings a recurring tension between kernel expressiveness, interpretability/uncertainty quantification, and computational cost.

This paper studies scaling limits of shallow BNNs via their connection to GPs, with emphasis on modeling, identifiability, and scalable inference.
First, we establish a general convergence result from wide BNNs to GPs under relaxed assumptions.
Second, we introduce a new covariance function based on a convex mixture of activation-induced components (tanh, sigmoid, ReLU, and Leaky-ReLU), yielding an interpretable kernel that combines smooth and angular behavior.
Third, we study positive definiteness and both strict and practical identifiability as a function of the input design, highlighting regimes where different mixtures become difficult to distinguish from finite samples.
Finally, we develop a scalable MAP training and prediction procedure via a Nyström approximation, making the cost--accuracy trade-off explicit through the rank and anchor-selection strategy.

We evaluate the proposed methodology on simulations and real-world tabular datasets, emphasizing estimation stability, predictive performance, and scalability as a function of the Nyström rank and anchor selection.

The remainder of the article is organized as follows. Section~\ref{sec-review} reviews the connection between shallow BNNs and GPs and introduces notation. Section~\ref{sec:shallow_bnns_gp} establishes the infinite-width limit and the general convergence result. Section~\ref{sec:mixed_kernel_bnn} derives the mixed kernel, discusses its properties, and presents identifiability results. Section~\ref{sec:estimacao_map_nystrom} describes the Nyström-based MAP procedure for scalable training and prediction. Section~\ref{sec:experimentos} presents the experiments and discusses the empirical findings. Finally, Section~\ref{sec:conclusao} summarizes the conclusions and outlines directions for future work.

\subsection{Background and Related Work}\label{sec-review}
Neural networks provide flexible function approximation and strong predictive performance, but increasing model complexity can hinder interpretability and rigorous uncertainty quantification. A Bayesian formulation addresses these issues by placing priors on weights and biases and producing predictive distributions with explicit measures of uncertainty \cite{mackay1997gaussian,wilson2020bayesian}. This perspective is particularly relevant in applied machine learning settings where decisions must be accompanied by calibrated notions of reliability.

A central theoretical result is that shallow BNNs, under standard scaling and independence assumptions, converge in the infinite-width limit (in the sense of finite-dimensional distributions) to Gaussian processes \cite{neal1996bayesian}. For specific architectures, closed-form expressions for the induced covariance function can be derived, completing the description of the limiting GP model \cite{williams1997computing}. The broader theory and exact inference machinery for GPs are consolidated in \cite{rasmussen2006gaussian}, providing a unified statistical framework for probabilistic prediction.

For deeper architectures, this correspondence extends through layer-wise recursions and is often described through the Neural Network Gaussian Process (\(NNGP_\infty\)) and the Neural Tangent Kernel (NTK), which clarify how activation, scaling, and initialization shape inductive bias and regularity, and how training dynamics can be characterized in large-width regimes \cite{lee2017deep,matthews2018gaussian,novak2018bayesian,bahri2018deep,jacot2018neural}. From a modeling standpoint, however, much of the literature focuses on isolated activation choices or restricted kernel families, limiting the range of covariance structures available for capturing combined smooth and angular behaviors that can arise in practice \cite{lee2017deep,novak2018bayesian}. Moreover, practical deployment requires scalable inference: exact GP computations scale cubically in time and quadratically in memory with the sample size, motivating scalable approximations such as low-rank methods and sparse neighbor-based constructions \cite{rasmussen2006gaussian,sauer2023vecchia,Datta2016NNGP,quiroz2023blocknngp,wilson2020bayesian,dutordoir2023deep}.

Two additional considerations become important when moving from limit theory to practice. First, finite-width networks exhibit residual dependencies that can affect effective kernels and uncertainty calibration \cite{vladimirova2021dependence,liu2023wide,garcia2023deep}. Second, as one adopts richer kernel parameterizations, identifiability and numerical stability become central, since distinct components may be difficult to distinguish under realistic input designs and finite samples \cite{rasmussen2006gaussian,Kuss2006}. These factors motivate methodological development that jointly addresses kernel expressiveness, identifiability, and scalable inference.

In this work, we build on these foundations by establishing a general convergence result from shallow Bayesian neural networks to Gaussian processes under weaker assumptions \cite{neal1996bayesian,williams1997computing}, proposing an interpretable mixed covariance function as a convex combination of activation-induced components with theoretical guarantees, and developing a scalable MAP training and prediction pipeline based on a Nyström approximation, where rank and anchor selection transparently govern the cost–accuracy trade-off.


\section{Infinite-width limit for shallow BNN}
\label{sec:shallow_bnns_gp}

The Bayesian approach to feedforward neural networks assigns prior distributions to model parameters,
thereby inducing a prior distribution over functions. A classical result shows that, when the number of
neurons in one (or more) hidden layers tends to infinity (\(H\to\infty\)), the induced prior over functions
converges to a GP, whose covariance is jointly determined by the activation function and
the parameter priors \cite{neal1996bayesian,williams1997computing,novak2020neural}.

\subsection{Model: shallow BNN with one hidden layer}
\label{subsec:shallow_bnn_model}

\begin{architecture}[Feedforward network with one hidden layer]
\label{def:ffn-1layer}
Let $\mathcal X\subseteq\mathbb R^{I}$ and $\mathcal Y\subseteq\mathbb R^{K}$.
Fix $H\in\mathbb N$ (number of hidden units) and activation functions
$h:\mathbb R\to\mathbb R$ (hidden layer) and $g:\mathbb R\to\mathbb R$ (output layer),
both applied componentwise.
We define the network $\mathbf f:\mathcal X\to\mathcal Y$ with parameters
\[
\mathbf U\in\mathbb R^{H\times I},\ \mathbf a\in\mathbb R^{H},\ 
\mathbf V\in\mathbb R^{K\times H},\ \mathbf b\in\mathbb R^{K}.
\]
Given an input $\mathbf x\in\mathcal X$, the forward pass is
\[
\mathbf z=\mathbf U\mathbf x+\mathbf a\in\mathbb R^{H},
\qquad
\mathbf h=h(\mathbf z)\in\mathbb R^{H},
\qquad
\mathbf f(\mathbf x)=g\!\big(\mathbf V\mathbf h+\mathbf b\big)\in\mathbb R^{K},
\]
where $g(\cdot)$ is applied elementwise in $\mathbb R^{K}$.
In indexed form, for $k=1,\dots,K$,
\[
f_k(\mathbf x)
=
g\!\left(
b_k+\sum_{j=1}^{H} v_{kj}\,
h\!\left(a_j+\sum_{i=1}^{I}u_{ji}x_i\right)
\right).
\]
For regression, we take $g(\cdot)\equiv \mathrm{Id}(\cdot)$. Hence,

\begin{equation}
\mathbf f(\mathbf x)=\mathbf V\mathbf h+\mathbf b\in\mathbb R^{K},
\qquad
f_k(\mathbf x)=b_k+\sum_{j=1}^{H} v_{kj}\,
h\!\left(a_j+\sum_{i=1}^{I}u_{ji}x_i\right).
\end{equation}

\end{architecture}
This architecture is a fully-connected shallow multilayer perceptron (MLP), i.e., a feedforward network with a single hidden layer.

Figures~\ref{fig:arquiteturaum} and~\ref{fig:forward_coluna_empilha_soB} provide two complementary views of the same one-hidden-layer feedforward network, as defined in Architecture~\ref{def:ffn-1layer}.
In Figure~\ref{fig:arquiteturaum}, an input $\mathbf x=(x_1,\ldots,x_I)$ feeds all $H$ hidden units simultaneously: each neuron $j$ computes the pre-activation
$z_j=a_j+\sum_{i=1}^I u_{ji}x_i$
and applies the nonlinearity $h(\cdot)$, yielding $h_j=h(z_j)$.
The hidden activations are then linearly combined at the output layer: for each output component $k$, the model sums the contributions $v_{kj}h_j$ and adds the bias $b_k$, resulting in
$f_k=b_k+\sum_{j=1}^H v_{kj}h_j$
(in regression, with $g(\cdot)\equiv \mathrm{Id}$).
Figure~\ref{fig:forward_coluna_empilha_soB} presents the same computation from a dataset perspective: one selects a column $\mathbf x^{(p)}$ from the input matrix $\mathbf X$, propagates this observation through the network to obtain $\mathbf f(\mathbf x^{(p)})\in\mathbb R^K$, and by repeating this for $p=1,\ldots,n$, stacks the outputs to form
$\mathbf F=[\mathbf f(\mathbf x^{(1)})\ \cdots\ \mathbf f(\mathbf x^{(n)})]\in\mathbb R^{K\times n}$.

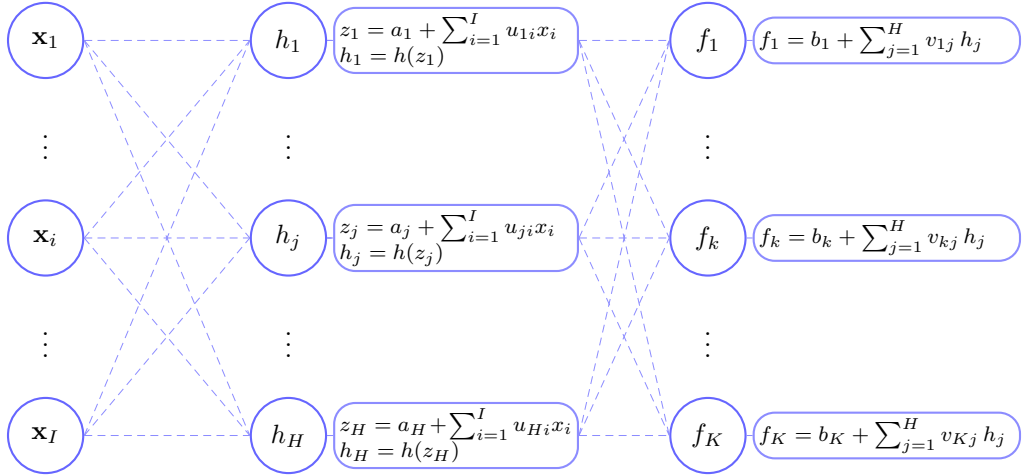
\begin{figure}[htbp]
\centering
\caption{Shallow feedforward network (one-hidden-layer MLP): $I$ inputs, $H$ hidden units, and $K$ outputs. The figure illustrates the regression case, where $g(\cdot)\equiv \mathrm{Id}(\cdot)$.}
\label{fig:arquiteturaum}

\begin{tikzpicture}[
  font=\small,
  node distance=12mm,
  xnode/.style={circle, draw=blue!60, thick, fill=white, minimum size=10mm},
  hnode/.style={circle, draw=blue!60, thick, fill=white, minimum size=10mm},
  fnode/.style={circle, draw=blue!60, thick, fill=white, minimum size=10mm},
  faint/.style={densely dashed, blue!45, line width=0.35pt},
  cardH/.style={rounded corners=8pt, draw=blue!45, thick, fill=white,
                align=left, inner sep=2pt, text width=31mm},
  cardF/.style={rounded corners=8pt, draw=blue!45, thick, fill=white,
                align=left, inner sep=2pt, text width=34mm}
]

\node[xnode] (x1) {$\mathbf{x}_1$};
\node[xnode, below=16mm of x1] (xi) {$\mathbf{x}_i$};
\node[xnode, below=16mm of xi] (xI) {$\mathbf{x}_I$};

\node at ($(x1)!0.5!(xi)$) {$\vdots$};
\node at ($(xi)!0.5!(xI)$) {$\vdots$};

\node[hnode, right=22mm of x1] (h1) {$h_1$};
\node[hnode, right=22mm of xi] (hj) {$h_j$};
\node[hnode, right=22mm of xI] (hH) {$h_H$};

\node at ($(h1)!0.5!(hj)$) {$\vdots$};
\node at ($(hj)!0.5!(hH)$) {$\vdots$};

\node[cardH, anchor=west] (c1) at ($(h1.east)+(0.8mm,0)$) {\scriptsize
$z_1=a_1+\sum_{i=1}^{I}u_{1i}x_i$\\[-2pt]
$h_1=h(z_1)$
};

\node[cardH, anchor=west] (cj) at ($(hj.east)+(0.8mm,0)$) {\scriptsize
$z_j=a_j+\sum_{i=1}^{I}u_{ji}x_i$\\[-2pt]
$h_j=h(z_j)$
};

\node[cardH, anchor=west] (cH) at ($(hH.east)+(0.8mm,0)$) {\scriptsize
$z_H=a_H+\sum_{i=1}^{I}u_{Hi}x_i$\\[-2pt]
$h_H=h(z_H)$
};

\draw[blue!45, line width=0.35pt] (h1.east) -- (c1.west);
\draw[blue!45, line width=0.35pt] (hj.east) -- (cj.west);
\draw[blue!45, line width=0.35pt] (hH.east) -- (cH.west);

\draw[blue!45, line width=0.35pt] (h1.east) -- ++(0.6mm,0);
\draw[blue!45, line width=0.35pt] (hj.east) -- ++(0.6mm,0);
\draw[blue!45, line width=0.35pt] (hH.east) -- ++(0.6mm,0);

\node[fnode, right=12mm of c1] (f1) {$f_1$};
\node[fnode, right=12mm of cj] (fk) {$f_k$};
\node[fnode, right=12mm of cH] (fK) {$f_K$};

\node at ($(f1)!0.5!(fk)$) {$\vdots$};
\node at ($(fk)!0.5!(fK)$) {$\vdots$};

\node[cardF, anchor=west] (d1) at ($(f1.east)+(0.8mm,0)$) {\scriptsize
$f_1=b_1+\sum_{j=1}^{H}v_{1j}\,h_j$
};

\node[cardF, anchor=west] (dk) at ($(fk.east)+(0.8mm,0)$) {\scriptsize
$f_k=b_k+\sum_{j=1}^{H}v_{kj}\,h_j$
};

\node[cardF, anchor=west] (dK) at ($(fK.east)+(0.8mm,0)$) {\scriptsize
$f_K=b_K+\sum_{j=1}^{H}v_{Kj}\,h_j$
};

\draw[blue!45, line width=0.35pt] (f1.east) -- (d1.west);
\draw[blue!45, line width=0.35pt] (fk.east) -- (dk.west);
\draw[blue!45, line width=0.35pt] (fK.east) -- (dK.west);

\draw[blue!45, line width=0.35pt] (h1.east) -- ++(0.6mm,0);
\draw[blue!45, line width=0.35pt] (hj.east) -- ++(0.6mm,0);
\draw[blue!45, line width=0.35pt] (hH.east) -- ++(0.6mm,0);

\foreach \x in {x1,xi,xI}{
  \foreach \h in {h1,hj,hH}{
    \draw[faint] (\x.east) -- (\h.west);
  }
}

\foreach \c in {c1,cj,cH}{
  \foreach \f in {f1,fk,fK}{
    \draw[faint] (\c.east) -- (\f.west);
  }
}

\end{tikzpicture}
\end{figure}

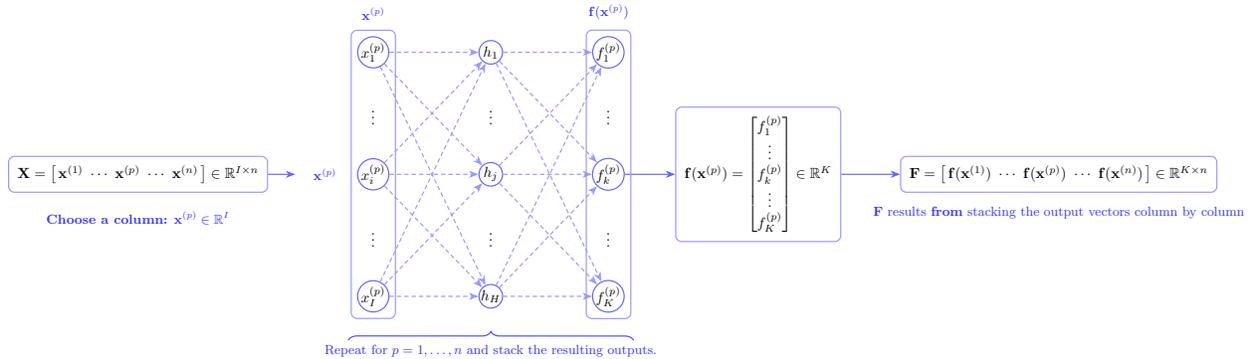
\begin{figure}[htbp]
\centering
\caption{
The column $\mathbf x^{(p)}\in\mathbb R^{I}$ is propagated through the network, producing the output vector $\mathbf f(\mathbf x^{(p)})\in\mathbb R^{K}$. Repeating this for $p=1,\dots,n$ and stacking the outputs as columns yields $\mathbf F=[\mathbf f(\mathbf x^{(1)})\ \cdots\ \mathbf f(\mathbf x^{(n)})]\in\mathbb R^{K\times n}$.
}
\label{fig:forward_coluna_empilha_soB}

\begin{adjustbox}{max width=\textwidth}
\begin{tikzpicture}[
  >=Stealth, scale=1.08,
  yscale=1.80, 
  every node/.style={font=\small},
  var/.style={circle,draw=blue!60,thick,minimum size=16pt,inner sep=0pt},
  arrow/.style={->,thick,blue!60},
  dottedarrow/.style={->,thick,blue!40,densely dashed},
  ellip/.style={draw=none,inner sep=0pt,minimum size=0pt},
  mat/.style={draw=blue!40,thick,rounded corners,inner sep=6pt}
]

\node[mat] (X) {$
\mathbf X=
\big[\,\mathbf{x}^{(1)}\ \cdots\ \mathbf{x}^{(p)}\ \cdots\ \mathbf{x}^{(n)}\,\big]
\in\mathbb{R}^{I\times n}
$};

\node[below=3mm of X, text=blue!80] (Xnote)
{\footnotesize \textbf{Choose a column:} $\mathbf{x}^{(p)}\in\mathbb{R}^{I}$};

\coordinate (Xpout) at ($(X.east)+(0.6,0)$);
\draw[arrow] (X.east) -- (Xpout);
\node[right=0.3cm of Xpout, text=blue!80] {\footnotesize $\mathbf{x}^{(p)}$};

\node[var]  (x1) at (5.2,1.5) {$x_1^{(p)}$};
\node[ellip] (xv1) at (5.2,0.75) {$\vdots$};
\node[var]  (xi) at (5.2,0) {$x_i^{(p)}$};
\node[ellip] (xv2) at (5.2,-0.75) {$\vdots$};
\node[var]  (xI) at (5.2,-1.5) {$x_I^{(p)}$};

\node[draw=blue!40, thick, rounded corners, fit=(x1)(xI), inner sep=4pt] (Xcol) {};
\node[above=1mm of Xcol, text=blue!80] {\footnotesize $\mathbf x^{(p)}$};



\node[var]  (h1)  at (7.8,1.5) {$h_1$};
\node[ellip] (hv1) at (7.8,0.75) {$\vdots$};
\node[var]  (hj)  at (7.8,0) {$h_j$};
\node[ellip] (hv2) at (7.8,-0.75) {$\vdots$};
\node[var]  (hH)  at (7.8,-1.5) {$h_H$};

\node[var]  (f1)  at (10.4,1.5) {$f_1^{(p)}$};
\node[ellip] (fv1) at (10.4,0.75) {$\vdots$};
\node[var]  (fk)  at (10.4,0) {$f_k^{(p)}$};
\node[ellip] (fv2) at (10.4,-0.75) {$\vdots$};
\node[var]  (fK)  at (10.4,-1.5) {$f_K^{(p)}$};

\foreach \a in {x1,xi,xI}{
  \foreach \b in {h1,hj,hH}{
    \draw[dottedarrow] (\a) -- (\b);
  }
}
\foreach \a in {h1,hj,hH}{
  \foreach \b in {f1,fk,fK}{
    \draw[dottedarrow] (\a) -- (\b);
  }
}

\node[mat, right=1.2cm of fk] (fpvec) {$
\mathbf{f}(\mathbf x^{(p)})=
\begin{bmatrix}
f_1^{(p)}\\ \vdots\\ f_k^{(p)}\\ \vdots\\ f_K^{(p)}
\end{bmatrix}
\in\mathbb{R}^{K}
$};

\draw[arrow] (fk.east) -- (fpvec.west);

\node[mat, right=1.4cm of fpvec] (Fmat) {$
\mathbf F=
\big[\,\mathbf{f}(\mathbf{x}^{(1)})\   \cdots\ \mathbf{f}(\mathbf{x}^{(p)})\ \cdots\ \mathbf{f}(\mathbf{x}^{(n)})\,\big]
\in\mathbb{R}^{K\times n}
$};

\node[draw=blue!40, thick, rounded corners, fit=(f1)(fK), inner sep=4pt] (Fcol) {};
\node[above=1mm of Fcol, text=blue!80] {\footnotesize $\mathbf f(\mathbf x^{(p)})$};


\draw[arrow] (fpvec.east) -- (Fmat.west);

\draw[decorate,decoration={brace,amplitude=6pt},blue!70,thick]
  ($(xI.south west)+(-0.3,-0.4)$) -- ($(fK.south east)+(0.3,-0.4)$);

\node[below=10pt, text=blue!80] at ($(xi.south)+(2.6,-1.65)$)
{\footnotesize Repeat for $p=1,\dots,n$ and stack the resulting outputs.};

\node[below=2mm of Fmat, text=blue!80]
{\footnotesize $\mathbf F$ results \textbf{from} stacking the output vectors column by column};

\end{tikzpicture}
\end{adjustbox}

\end{figure}

We assume that the observed responses $\mathbf y^{(p)}$ are noisy measurements of the network outputs.
In particular, we consider a nonlinear regression model with i.i.d.\ Gaussian observation noise:
\begin{equation}\label{RNB_eq1}
\mathbf{y}^{(p)} \;=\; \mathbf{f}(\mathbf{x}^{(p)}) \;+\; \boldsymbol{\varepsilon}^{(p)}, 
\qquad \boldsymbol{\varepsilon}^{(p)} \sim \mathcal{N}(\mathbf{0},\,\sigma_\epsilon^2 \mathbf I_K),\quad p=1,\dots,n,
\end{equation}
where, for $k=1,\dots,K$,
\begin{equation}\label{eq:shallow_bnn}
f_k(\mathbf x) \;=\; b_k \;+\; \sum_{j=1}^{H} v_{kj}\,
h\!\left(a_j \;+\; \sum_{i=1}^{I} u_{ji}\,x_i\right).
\end{equation}

Under the Bayesian approach, we place priors on the network parameters. Specifically,
\begin{equation}\label{RNB_eq3}
\{u_{ji}\}_{j,i}\stackrel{\text{i.i.d.}}{\sim}\mathcal{N}(0,\sigma_u^2),\;\;\;
\{a_j\}_{j=1}^H\stackrel{\text{i.i.d.}}{\sim}\mathcal{N}(0,\sigma_a^2),\;\;\;
\{v_{kj}\}_{k,j}\stackrel{\text{i.i.d.}}{\sim}\mathcal{N}\!\big(0,\tfrac{\sigma_v^2}{H}\big),\;\;\;
\{b_k\}_{k=1}^K\stackrel{\text{i.i.d.}}{\sim}\mathcal{N}(0,\sigma_b^2).
\end{equation}

These priors contribute to inference by regularizing the likelihood implied by the Gaussian nonlinear regression model.
In particular, the choice $v_{kj}\sim\mathcal{N}\!\big(0,\tfrac{\sigma_v^2}{H}\big)$ ensures that the output variance remains finite
as the hidden-layer width $H$ grows. Without the $1/H$ scaling, each additional hidden unit would add variance to the output, and
the variance of $f_k(\mathbf x)$ would typically diverge as $H\to\infty$.

\begin{architecture}[Bayesian feedforward neural network (BFNN)]
\label{def:bnn_prior_feedforward_simplified}
Let $\mathcal X\subseteq\mathbb R^{I}$ and $\mathcal Y\subseteq\mathbb R^{K}$.
Fix $H\in\mathbb N$ (number of hidden units) and a componentwise activation $h:\mathbb R\to\mathbb R$.
The observation model and the prior specification are given by \eqref{RNB_eq1}, \eqref{eq:shallow_bnn}, and \eqref{RNB_eq3}.
\end{architecture}

Under mild moment conditions (see \Cref{subsec:regularity_main}), finite-dimensional distributions of the network
outputs converge to Gaussian distributions, thus characterizing the limit as a GP.

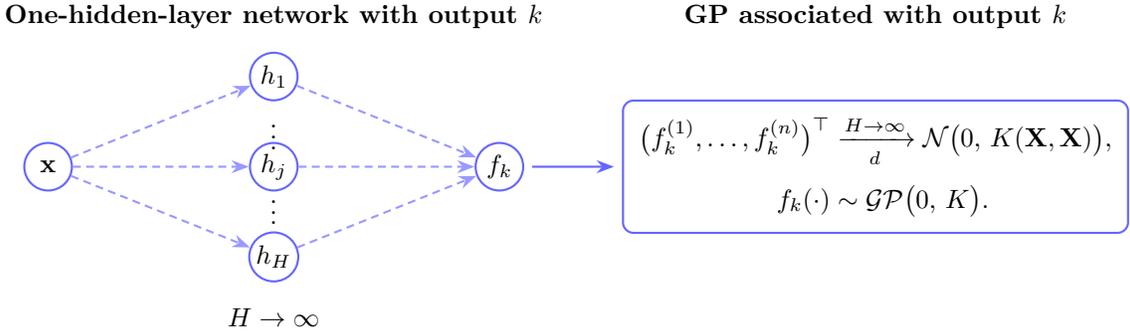
\begin{figure}[htbp]
\centering
\begin{adjustbox}{max width=\textwidth}
\begin{tikzpicture}[
  >=Stealth,
  every node/.style={font=\small},
  var/.style={circle,draw=blue!60,thick,minimum size=18pt,inner sep=1pt},
  arrow/.style={->,thick,blue!60},
  dottedarrow/.style={->,thick,blue!40,densely dashed}
]
\node[var] (x) at (0,0) {$\mathbf{x}$};

\node[var]  (h1) at (3,1.2) {$h_1$};
\node        (hv1) at (3,0.5) {$\vdots$};
\node[var]  (hj) at (3,0.0) {$h_j$};
\node        (hv2) at (3,-0.5) {$\vdots$};
\node[var]  (hH) at (3,-1.2) {$h_H$};

\node at (3,-2.0) {$H \to \infty$};

\node[var] (fk) at (6,0) {$f_k$};

\node at (3, 2.0) {\textbf{One-hidden-layer network with output $k$}};

\draw[dottedarrow] (x) -- (h1);
\draw[dottedarrow] (x) -- (hj);
\draw[dottedarrow] (x) -- (hH);

\draw[dottedarrow] (h1) -- (fk);
\draw[dottedarrow] (hj) -- (fk);
\draw[dottedarrow] (hH) -- (fk);

\node[draw=blue!60,thick,rounded corners,inner sep=6pt] (GP) at (11,0) {$
\begin{aligned}
\bigl(f_k^{(1)},\dots,f_k^{(n)}\bigr)^\top 
&\xrightarrow[d]{H\to\infty}
\mathcal{N}\bigl(0,\,K(\mathbf{X},\mathbf{X})\bigr),\\[0.3em]
f_k(\cdot) &\sim \mathcal{GP}\bigl(0,\,K\bigr).
\end{aligned}
$};

\node at (11,2.0) {\textbf{GP associated with output $k$}};

\draw[arrow,shorten >=3pt,shorten <=3pt]
  (fk.east) -- (GP.west);
\end{tikzpicture}
\end{adjustbox}
\caption{Limit of a wide one-hidden-layer network to a GP for output \(k\).
As \(H\to\infty\), the vector \(\bigl(f_k^{(1)},\dots,f_k^{(n)}\bigr)^\top\) converges in distribution
to a multivariate normal with covariance \(K(\mathbf{X},\mathbf{X})\), and the collection
\(\{f_k(\mathbf{x})\}_{\mathbf{x}}\) defines a GP \(\mathcal{GP}(0,K)\).}
\label{fig:nn_to_gp_output_k}
\end{figure}

\subsection{Limiting kernel and interpretation}
\label{subsec:limiting_kernel}

In the \(H\to\infty\) regime, the model is no longer described inferentially by individual weights.
Instead, it is fully characterized by a covariance function \(K(\mathbf{x},\mathbf{x}')\),
determined by \(h\) and the parameter priors. For \eqref{eq:shallow_bnn}, the limiting covariance takes the form
\begin{equation}
\label{eq:kernel_general_main}
K(\mathbf{x},\mathbf{x}') \;=\; \sigma_b^2 \;+\; \sigma_v^2 \,\mathbb{E}\!\left[h(Z)\,h(Z')\right],
\end{equation}
where \((Z,Z')\) denotes the pair of pre-activations associated with \((\mathbf{x},\mathbf{x}')\) induced by the priors on
\(a\) and \(u\) (details in Appendix~\ref{apx:preactivation_moments}).

\subsection{Minimal regularity condition}
\label{subsec:regularity_main}

To ensure that \eqref{eq:kernel_general_main} is well defined and that the output variance remains finite under
\(\mathbb{V}(v_{kj})=\sigma_v^2/H\), it suffices to assume that
\begin{equation}
\label{eq:second_moment_condition_main}
\mathbb{E}[h(Z)^2] < \infty \quad \text{for } Z \text{ distributed according to the implied prior on pre-activations.}
\end{equation}
Common activations satisfy \eqref{eq:second_moment_condition_main}: \(\tanh\) and sigmoid are bounded; ReLU and Leaky ReLU
grow at most linearly and thus have a finite second moment under Gaussian \(Z\).
A more general sufficient condition (polynomial growth) and a proof are given in Appendix~\ref{apx:regularity}.

\subsection{General convergence theorem}
\label{subsec:convergence_theorem_main}

We state a general version of the convergence result (in the sense of finite-dimensional distributions) for shallow BNNs.

\begin{theorem}[Convergence of a shallow BNN to a GP]
\label{thm:convergence_main}
Consider the model \eqref{eq:shallow_bnn} with the prior independent, centered parameters with finite variances and
the scaling \(\mathbb{V}(v_{kj})=\sigma_v^2/H\). Assume \(h\) and the priors are such that \(\mathrm{Var}(f_k(\mathbf{x}))<\infty\)
for all \(\mathbf{x}\) (e.g., \(\mathbb{E}[h(Z)^2]<\infty\)).
Then, for each \(k\in\{1,\dots,K\}\), as \(H\to\infty\), \(f_k\) converges in law (in the sense of finite-dimensional distributions)
to a Gaussian process
\[
f_k(\cdot)\sim \mathcal{GP}\!\bigl(0,\,K(\cdot,\cdot)\bigr),
\]
with kernel given by \eqref{eq:kernel_general_main}, namely
\begin{equation}
K(\mathbf{x},\mathbf{x}') = \sigma_b^2 + \sigma_v^2\,\mathbb{E}\!\left[h(Z)\,h(Z')\right],
\end{equation}
where \((Z,Z')\) is the pre-activation pair associated with \((\mathbf{x},\mathbf{x}')\) under the parameter priors.
\end{theorem}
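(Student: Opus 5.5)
Fix $k$ and finitely many inputs $\mathbf x^{(1)},\dots,\mathbf x^{(n)}$. We show that the random vector $\mathbf F_k^{H}=(f_k(\mathbf x^{(1)}),\dots,f_k(\mathbf x^{(n)}))^\top$ converges in distribution to $\mathcal N(\mathbf 0,\mathbf K)$ with $\mathbf K_{pq}=K(\mathbf x^{(p)},\mathbf x^{(q)})$. By the Cramér--Wold device it suffices to show, for each fixed $\boldsymbol\lambda\in\mathbb R^n$, that $\boldsymbol\lambda^\top\mathbf F_k^{H}\xrightarrow{d}\mathcal N(0,\boldsymbol\lambda^\top\mathbf K\boldsymbol\lambda)$. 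Convergence of all finite-dimensional distributions to those of $\mathcal{GP}(0,K)$ is exactly the claim. Consistency of the family is automatic, since $K$ is a covariance function, being a limit of covariances.

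\textbf{Decomposition.} Write
\[
\boldsymbol\lambda^\top\mathbf F_k^{H}=b_k\Lambda+\frac{1}{\sqrt H}\sum_{j=1}^H W_j,
\qquad
W_j:=\sqrt H\,v_{kj}\sum_{p=1}^n\lambda_p\,h\bigl(z_j(\mathbf x^{(p)})\bigr),
\]
where $\Lambda=\sum_p\lambda_p$ and $z_j(\mathbf x)=a_j+\sum_i u_{ji}x_i$. Under the prior, the pairs $(v_{kj},(u_{j\cdot},a_j))$ are independent across $j$ and identically distributed across $j$. The variable $\sqrt H\,v_{kj}$ has a law independent of $H$, with mean zero and variance $\sigma_v^2$, and it is independent of $(u_{j\cdot},a_j)$. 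Hence $W_1,W_2,\dots$ are i.i.d. (with respect to a fixed $H$-free law), and
\[
\mathbb E[W_j]=0,\qquad
\mathbb E[W_j^2]=\sigma_v^2\sum_{p,q}\lambda_p\lambda_q\,\mathbb E\bigl[h(Z_p)h(Z_q)\bigr].
\]
This second moment is finite by the hypothesis $\mathbb E[h(Z)^2]<\infty$ together with Cauchy--Schwarz. Under the general hypothesis of finite output variance, one reads the same finiteness off directly from $\mathrm{Var}(f_k(\mathbf x))=\sigma_b^2+\sigma_v^2\,\mathbb E[h(Z)^2]$.

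\textbf{Main step.} The classical Lindeberg--Lévy CLT gives $H^{-1/2}\sum_j W_j\xrightarrow{d}\mathcal N(0,\mathbb E W_1^2)$. The term $b_k\Lambda$ is exactly Gaussian and independent of the sum, with variance $\sigma_b^2\Lambda^2$. Characteristic functions multiply, so the sum of the two pieces converges to a centered normal with variance
\[
\sigma_b^2\Lambda^2+\mathbb E W_1^2=\sum_{p,q}\lambda_p\lambda_q\bigl(\sigma_b^2+\sigma_v^2\,\mathbb E[h(Z_p)h(Z_q)]\bigr)=\boldsymbol\lambda^\top\mathbf K\boldsymbol\lambda,
\]
as required.

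\textbf{Main obstacle.} The delicate point is the generality of the priors. If the $v_{kj}$ are only assumed centered with variance $\sigma_v^2/H$, the rescaled weights $\sqrt H\,v_{kj}$ may have laws that change with $H$. Then the summands form a triangular array rather than an i.i.d. sequence. In that case I would invoke the Lindeberg--Feller CLT and verify the Lindeberg condition $\mathbb E\bigl[W_{Hj}^2\mathbf 1\{|W_{Hj}|>\varepsilon\sqrt H\}\bigr]\to0$. This holds, for example, under uniform integrability of $\{H v_{kj}^2\}$ (automatic in the Gaussian case), using independence of $v$ and $h(Z)$ to split the truncation. Similarly, if the bias $b_k$ is non-Gaussian, the limit is Gaussian only when $b_k$ is Gaussian or $\sigma_b^2=0$, so the theorem is read with the Gaussian bias prior of \eqref{RNB_eq3}. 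Finally, when $K$ is degenerate, the limiting normal is understood in the degenerate sense, which characteristic functions handle without change.
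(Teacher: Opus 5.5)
Your proof is correct and follows essentially the paper's route. The paper also applies a CLT to the i.i.d.\ hidden-unit contributions after rescaling to $\sqrt H\,v_{kj}$, lifts to finite-dimensional distributions via Cram\'er--Wold (as its proof sketch says), and computes the same covariance $\sigma_b^2+\sigma_v^2\,\mathbb E[h(Z)h(Z')]$. Your version is tighter than the appendix, which asserts joint normality from the marginal CLTs plus the covariance formula instead of applying the CLT to $\boldsymbol\lambda^\top\mathbf F_k^H$. Your caveats (a Gaussian $b_k$, and a Lindeberg or uniform-integrability condition when the law of $\sqrt H\,v_{kj}$ depends on $H$) correctly identify hypotheses the theorem's stated generality tacitly needs.
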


\paragraph{Proof sketch.}
For fixed inputs \((\mathbf{x}_1,\dots,\mathbf{x}_n)\), the vector
\(\bigl(f_k(\mathbf{x}_1),\dots,f_k(\mathbf{x}_n)\bigr)\) can be written as a sum of i.i.d.\ terms over hidden units \(j\),
with variance controlled by the \(1/H\) scaling. Under \eqref{eq:second_moment_condition_main}, the multivariate CLT
(e.g.\ via Cramér--Wold) yields convergence to a multivariate normal with covariance \(K(\mathbf{X},\mathbf{X})\).
Since this holds for any finite set of inputs, the limit defines a GP. The full proof is given in Appendix~\ref{apx:proof_thm}.


\section{Mixed kernel derived from a BNN}
\label{sec:mixed_kernel_bnn}

In this section, we show how our \emph{mixed kernel} arises directly from a BNN prior.
The key point is that, under the standard scaling for wide hidden layers, the BNN prior induces a GP
whose covariance function can be written explicitly as an expectation under a bivariate Gaussian distribution.
Our mixed kernel is obtained by summing (and, optionally, adding) multiple kernels induced in this way, corresponding to
different components of the BNN prior.

\subsection{Kernels induced by nonlinear activations in BNNs}
\label{subsec:kernels_activations_bnn}

By the infinite-width convergence result (Theorem~\ref{thm:convergence_main}) under the architecture~\ref{eq:shallow_bnn},
for fixed inputs \(\mathbf{x},\mathbf{x}'\) the kernel induced by an activation \(h\) takes the form
\[
K_h(\mathbf{x},\mathbf{x}')
=
\sigma_b^2+\sigma_v^2\,\mathbb{E}\!\big[h(Z)\,h(Z')\big],
\]
where \((Z,Z')\) is the pair of pre-activations induced by the prior and is a centered bivariate Gaussian. Writing
\(\sigma_z^2=\mathrm{Var}(Z)\), \(\sigma_{z'}^2=\mathrm{Var}(Z')\), and \(\rho=\mathrm{Corr}(Z,Z')\), we obtain the integral representation
\[
\mathbb{E}\!\big[h(Z)\,h(Z')\big]
=
\iint_{\mathbb{R}^2} h(z)\,h(z')\;\phi_{\rho,\sigma_z,\sigma_{z'}}(z,z')\,dz\,dz',
\]
where \(\phi_{\rho,\sigma_z,\sigma_{z'}}\) denotes the bivariate Normal density with zero means, standard deviations \(\sigma_z,\sigma_{z'}\),
and correlation \(\rho\).

Next, we collect expressions available in the literature for the inner term
\(\mathbb{E}[h(Z)h(Z')]\), in closed form when available or via standard approximations,
for four reference activations:

\paragraph{tanh.}
The integral does not admit a simple closed form; one uses the standard approximation
\(\tanh(x)\approx \mathrm{erf}\!\left(\tfrac{\sqrt{\pi}}{2}x\right)\) \cite{williams1997computing}, which yields the arcsine kernel:

\begin{equation}\label{LCov1}
\mathbb{E}[\tanh(Z)\tanh(Z')]
\;\approx\;
\frac{2}{\pi}\,
\arcsin\!\Biggl(
\frac{\tfrac{\pi}{2}\,\rho\,\sigma_z \sigma_{z'}}
{\sqrt{\Bigl(1+\tfrac{\pi}{2}\sigma_z^2\Bigr)\Bigl(1+\tfrac{\pi}{2}\sigma_{z'}^2\Bigr)}}
\Biggr).
\end{equation}

\paragraph{Sigmoid.}
Similarly (via an $\mathrm{erf}$ approximation), one obtains:
\begin{equation}\label{LCov2}
\mathbb{E}[\mathrm{sig}(Z)\,\mathrm{sig}(Z')]
\;\approx\;
\frac{1}{4}
+
\frac{1}{2\pi}\,
\arcsin\!\Biggl(
\frac{\tfrac{\pi}{8}\,\rho\,\sigma_z \sigma_{z'}}
{\sqrt{\Bigl(1+\tfrac{\pi}{8}\sigma_z^2\Bigr)\Bigl(1+\tfrac{\pi}{8}\sigma_{z'}^2\Bigr)}}
\Biggr).
\end{equation}

\paragraph{ReLU.}
For \(\mathrm{ReLU}(z)=\max(0,z)\), a closed form is available:
\begin{equation}\label{LCov3}
\mathbb{E}[\mathrm{ReLU}(Z)\,\mathrm{ReLU}(Z')]
=
\frac{\sigma_z\,\sigma_{z'}}{2\pi}
\Bigl[
\sqrt{1-\rho^2} + \rho\bigl(\pi-\arccos(\rho)\bigr)
\Bigr].
\end{equation}

\paragraph{LeakyReLU.}
For \(\mathrm{LeakyReLU}_\alpha(z)=\max(z,\alpha z)\), define
\[
S(\rho):=\frac{1}{2\pi}\Bigl[\sqrt{1-\rho^2}+\rho\bigl(\pi-\arccos(\rho)\bigr)\Bigr],
\]
and obtain:
\begin{equation}\label{LCov4}
\mathbb{E}[\mathrm{LeakyReLU}_\alpha(Z)\,\mathrm{LeakyReLU}_\alpha(Z')]
=
\sigma_z\sigma_{z'}\Bigl[\alpha\,\rho+(1-\alpha)^2\,S(\rho)\Bigr].
\end{equation}

Finally, substituting each of the expressions above into
\(K_h(\mathbf{x},\mathbf{x}')=\sigma_b^2+\sigma_v^2\,\mathbb{E}[h(Z)h(Z')]\),
we obtain the corresponding induced kernels. The calculation steps and approximations are detailed Supplementary Material (Section S.2).

\subsection{Mixed kernel from the GP limit of a BNN}
\label{sec:practical-ident}

Each choice of activation function $h$ determines, in the infinite-width limit, a specific covariance function through the term $\mathbb{E}[h(Z)h(Z')]$. In applications, it is often useful to combine features of different activations. In this work, we implement such a combination through an additive (block) construction, i.e., as a sum of $M$ independent subnetworks, each using one activation $h_m$. This construction yields, in the GP limit, a kernel exactly in the form of an additive (and convex) combination of the component kernels, with no cross-terms.

For simplicity, we first consider a single-output version (the extension to each output component $k$ is immediate). Let
\[
z_j(\mathbf x)=a_j+\sum_{i=1}^{I}u_{ji}x_i,\qquad j=1,\dots,H,
\]
and define the function prior
\begin{equation}
\label{eq:block_mixture_prior}
f(\mathbf x)
=
b+\sum_{m=1}^M \sqrt{w_m}\sum_{j=1}^H v^{(m)}_{j}\,h_m\!\big(z_j(\mathbf x)\big),
\qquad
\sum_{m=1}^M w_m = 1,\;\; w_m\in(0,1),
\end{equation}
with
\[
v^{(m)}_{j}\overset{\text{i.i.d.}}{\sim}\mathcal N\!\left(0,\frac{\sigma_v^2}{H}\right),
\qquad
b\sim\mathcal N(0,\sigma_b^2),
\]
and independence across the collections $\{v^{(m)}_{j}\}_{j=1}^H$ for different $m$. This independence (together with $\mathbb E[v^{(m)}_{j}]=0$) implies that, when expanding the covariance, all cross-terms with $m\neq m'$ vanish, leading to an additive kernel in the GP limit. Moreover, to ensure that the CLT-based argument applies in the standard way, it suffices to assume that each component satisfies $\mathbb{E}[h_m(Z)^2]<\infty$ under the implied prior for the pre-activations; in that case, the output has finite variance and the GP limit is well defined.

\begin{proposition}[Convergence with mixed activations (additive blocks)]
\label{prop:convergencia_mista}
Consider the shallow network under the additive construction \eqref{eq:block_mixture_prior},
with the same assumptions as in Theorem~\ref{thm:convergence_main} for $(a_j,u_{ji})$, and with output-weight blocks
$\{v^{(m)}_{j}\}$ independent across components $m$, each scaled as $v^{(m)}_{j}\sim \mathcal{N}(0,\sigma_v^2/H)$.
Then, as $H\to\infty$, $f$ converges in distribution (in the sense of finite-dimensional distributions) to a Gaussian process
\[
f(\cdot)\sim \mathcal{GP}\!\bigl(0,K(\cdot,\cdot)\bigr),
\]
with covariance function
\begin{equation}
\label{eq:kernel_block_mixture}
K(\mathbf{x},\mathbf{x}')
=
\sigma_b^2+\sigma_v^2\sum_{m=1}^M w_m\,K_m(\mathbf{x},\mathbf{x}'),
\qquad
K_m(\mathbf{x},\mathbf{x}')
=
\mathbb{E}\!\left[h_m(Z)\,h_m(Z')\right],
\end{equation}
where $(Z,Z')$ denotes the pre-activation pair associated with $(\mathbf x,\mathbf x')$ under the priors (e.g., a centered bivariate Normal under Gaussian $(a_j,u_{ji})$), as described in Appendix~\ref{apx:preactivation_moments}.
\end{proposition}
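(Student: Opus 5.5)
The plan is to reduce the statement to a multivariate CLT for a sum of i.i.d.\ vectors indexed by the hidden unit $j$, exactly as in the proof of Theorem~\ref{thm:convergence_main}. Fix inputs $\mathbf x_1,\dots,\mathbf x_n$ and write $\mathbf f=(f(\mathbf x_1),\dots,f(\mathbf x_n))^\top$. Rewrite \eqref{eq:block_mixture_prior} as
\[
\mathbf f=b\,\mathbf 1+\frac{1}{\sqrt H}\sum_{j=1}^H \boldsymbol\xi_j,\qquad
(\boldsymbol\xi_j)_p=\sum_{m=1}^M \sqrt{w_m}\,\tilde v^{(m)}_j\,h_m\big(z_j(\mathbf x_p)\big),
\]
with $\tilde v^{(m)}_j=\sqrt H\,v^{(m)}_j\sim\mathcal N(0,\sigma_v^2)$. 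The vectors $\boldsymbol\xi_j$ are i.i.d.\ across $j$, since $(a_j,u_{j\cdot})$ and $(\tilde v^{(1)}_j,\dots,\tilde v^{(M)}_j)$ are independent across $j$ and their laws do not depend on $H$. They are also independent of $b$.

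Next compute the first two moments of $\boldsymbol\xi_1$. Condition on $(a_1,u_{1\cdot})$ and use that the $\tilde v^{(m)}_1$ are centered and mutually independent across $m$. This gives $\mathbb E[\boldsymbol\xi_1]=\mathbf 0$ and
\[
\mathbb E[(\boldsymbol\xi_1)_p(\boldsymbol\xi_1)_q]=\sum_{m,m'}\sqrt{w_mw_{m'}}\,\mathbb E[\tilde v^{(m)}_1\tilde v^{(m')}_1]\,\mathbb E[h_m(Z_p)h_{m'}(Z_q)]=\sigma_v^2\sum_m w_m K_m(\mathbf x_p,\mathbf x_q).
\]
All cross terms vanish because $\mathbb E[\tilde v^{(m)}\tilde v^{(m')}]=0$ for $m\neq m'$. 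Finiteness follows from the assumption $\mathbb E[h_m(Z)^2]<\infty$ via Cauchy--Schwarz, $|\mathbb E[h_m(Z_p)h_m(Z_q)]|\le \mathbb E[h_m(Z_p)^2]^{1/2}\mathbb E[h_m(Z_q)^2]^{1/2}$. The finiteness of the second moment of each coordinate also needs independence of $\tilde v$ and $z$, which gives $\mathbb E[\tilde v^2h_m^2]=\sigma_v^2\mathbb E[h_m^2]$. Here $(Z_p,Z_q)$ is the pre-activation pair described in Appendix~\ref{apx:preactivation_moments}.

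Then apply the Cramér--Wold device. For $\mathbf t\in\mathbb R^n$, the scalars $\mathbf t^\top\boldsymbol\xi_j$ are i.i.d., centered, and have finite variance $\mathbf t^\top\Sigma\mathbf t$, where $\Sigma_{pq}=\sigma_v^2\sum_m w_mK_m(\mathbf x_p,\mathbf x_q)$. The classical Lindeberg--Lévy CLT therefore gives $H^{-1/2}\sum_j\mathbf t^\top\boldsymbol\xi_j\Rightarrow\mathcal N(0,\mathbf t^\top\Sigma\mathbf t)$, and hence $H^{-1/2}\sum_j\boldsymbol\xi_j\Rightarrow\mathcal N(\mathbf 0,\Sigma)$. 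This covers the degenerate case $\mathbf t^\top\Sigma\mathbf t=0$ trivially. Because $b\mathbf 1\sim\mathcal N(\mathbf 0,\sigma_b^2\mathbf 1\mathbf 1^\top)$ is independent of this sum and does not depend on $H$, characteristic functions factor. So $\mathbf f\Rightarrow\mathcal N(\mathbf 0,\sigma_b^2\mathbf 1\mathbf 1^\top+\Sigma)$, which is the Gram matrix of \eqref{eq:kernel_block_mixture}.

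Finally, the limiting finite-dimensional laws are centered Gaussians whose covariances are restrictions of a single function $K$. They are therefore consistent under marginalization and permutation, and Kolmogorov's extension theorem yields the process $\mathcal{GP}(0,K)$. The main point to handle carefully is not the CLT itself but the moment computation. One must check that the shared pre-activations $z_j$ across blocks create no cross-covariance, which is exactly where block independence and centering of the $v^{(m)}$ enter. One must also make sure the i.i.d.\ structure in $j$ is exact rather than only triangular-array. The rescaling by $\sqrt H$ makes it exact, so no Lindeberg condition is needed.
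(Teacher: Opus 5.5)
Your proof is correct and follows the paper's route: expand the covariance over hidden units and blocks, and kill the $m\neq m'$ cross terms using independence and centering of the output-weight blocks (despite the shared pre-activations $z_j$). Both then apply the multivariate CLT to the i.i.d.\ hidden-unit contributions to get the Gaussian limit with the mixed kernel $K$. The $\sqrt H$ rescaling to an exact i.i.d.\ sum, the Cram\'er--Wold step, the separate treatment of the independent bias $b$, and the Kolmogorov extension spell out rigorously what the paper's Step~6 asserts in one line.
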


\paragraph{Proof idea.}
Write $f(\mathbf x)=b+\sum_{m=1}^M \sqrt{w_m}\,g_m(\mathbf x)$, where
$g_m(\mathbf x)=\sum_{j=1}^H v^{(m)}_{j}\,h_m(z_j(\mathbf x))$.
Because the blocks $\{v^{(m)}_{j}\}$ are independent across $m$ and centered, cross-covariances vanish:
$\mathrm{Cov}(g_m(\mathbf x),g_{m'}(\mathbf x'))=0$ for $m\neq m'$.
For each fixed $m$, the standard wide-network argument (multivariate CLT with $1/H$ scaling) yields
$\mathrm{Cov}(g_m(\mathbf x),g_m(\mathbf x'))\to \sigma_v^2\,\mathbb E[h_m(Z)h_m(Z')]$.
Summing over $m$ and adding $\mathrm{Var}(b)=\sigma_b^2$ gives \eqref{eq:kernel_block_mixture}.
A complete proof is given in Appendix~\ref{apx:ident_mix_proof}.

\subsection{Identifiability of the mixed kernel}
\label{subsec:identificabilidade_kernel_misto}

The inner term $\mathbb{E}[h(Z)\,h(Z')]$ fully determines the dependence structure of the process over the input space and,
therefore, each choice of activation $h_m$ induces a specific covariance term
\[
K_m(\mathbf{x},\mathbf{x}')=\mathbb{E}\!\big[h_m(Z)\,h_m(Z')\big].
\]
In the infinite-width regime, the GP associated with the BNN has a mixture-form kernel
\begin{equation}
K(\mathbf{x},\mathbf{x}')
=\sigma_b^2+\sigma_v^2\sum_{m=1}^{M} w_m\,K_m(\mathbf{x},\mathbf{x}'),
\label{eq:kernel_mix_general}
\end{equation}
where we assume $w_m\ge 0$ (and, if desired for a mixture interpretation, $\sum_m w_m=1$).






\begin{table}[!ht]
  \centering
  \caption{Pearson correlation between the \emph{shapes} of the kernels (curves as functions of $\rho$ with $\sigma_z=\sigma_{z'}=1$),
  after centering and normalization.}
  \label{tab:corr-kernels}
  \begin{tabularx}{\textwidth}{@{}X c@{}}
    \toprule
    \textbf{Compared pairs} & \textbf{Correlation} \\
    \midrule
    tanh \(\times\) sigmoid & 0.9999 \\
    ReLU \(\times\) LeakyReLU (\(\alpha=0.1\)) & 0.9983 \\
    ReLU \(\times\) LeakyReLU (\(\alpha=0.3\)) & 0.9919 \\
    \bottomrule
  \end{tabularx}
\end{table}

The expressions \eqref{LCov1} to \eqref{LCov4} report four inner terms $K_m$ arising in the infinite-width limit.
Although these kernels are distinct in principle, in practice they can be very close as functions of
$\rho(\mathbf{x},\mathbf{x}')$ (for comparable scales of $\sigma_z$ and $\sigma_{z'}$).
As a consequence, different combinations of weights and hyperparameters may yield very similar covariance matrices
over a finite set of inputs, leading to (near) non-identifiability and ill-conditioned estimation problems,
a phenomenon that is typical in mixture models.

In particular, monotone saturating activations such as \textit{tanh} and sigmoid induce arc-sine family kernels
(via the \textit{erf} approximation) and produce smooth curves, monotone in $\rho$, with similar curvature around $\rho=0$
and saturation as $|\rho|\to 1$.
Likewise, ReLU and LeakyReLU induce nearly colinear shapes: the LeakyReLU term can be viewed as a perturbation of ReLU
by a linear term in $\rho$ weighted by $\alpha$, which for usual values of $\alpha$ leads to a numerically small difference.
This proximity is reflected in Table~\ref{tab:corr-kernels}, which reports very high Pearson correlations
between the curves (after centering and normalization) as functions of $\rho$.

In light of this, we adopt two representative components. For the smooth component, we take $K_{\mathrm{smooth}}\equiv K_1$,
since $K_1\approx K_2$ under the conditions above. For the angular component, we choose $K_{\mathrm{angular}}\equiv K_4$
(LeakyReLU) as the representative of the pair, because $K_3\approx K_4$ and $K_4$ includes the parameter $\alpha$ that controls asymmetry:
\[
\begin{aligned}
K_{\mathrm{smooth}}(\mathbf{x},\mathbf{x}') 
&\;\equiv\; K_{1}(\mathbf{x},\mathbf{x}')\;\approx\;K_{2}(\mathbf{x},\mathbf{x}'),\\
K_{\mathrm{angular}}(\mathbf{x},\mathbf{x}') 
&\;\equiv\; K_{4}(\mathbf{x},\mathbf{x}')\;\approx\;K_{3}(\mathbf{x},\mathbf{x}').
\end{aligned}
\]

Thus, we introduce a single mixing parameter $w\in(0,1)$ and define the reduced mixed kernel
\begin{align}
K_{\mathrm{mix}}(\mathbf{x},\mathbf{x}') 
&= \sigma_b^2 + \sigma_v^2\Bigl[
   w\,K_{\mathrm{smooth}}(\mathbf{x},\mathbf{x}')
   + (1-w)\,K_{\mathrm{angular}}(\mathbf{x},\mathbf{x}')
\Bigr]
\label{eq:k_mix_reduced}
\\
&= \sigma_b^2
   + \sigma_v^2\Biggl[
     w\,\frac{2}{\pi}\,\arcsin\!\Biggl(
       \frac{\tfrac{\pi}{2}\,\rho\,\sigma_z\,\sigma_{z'}}
            {\sqrt{\bigl(1+\tfrac{\pi}{2}\,\sigma_z^2\bigr)\,
                    \bigl(1+\tfrac{\pi}{2}\,\sigma_{z'}^2\bigr)}}
     \Biggr)
\nonumber\\
&+ (1-w)\,\Biggl(
       \alpha\,\rho\,\sigma_z\,\sigma_{z'}
       + (1-\alpha)^2\,
         \frac{\sigma_z\,\sigma_{z'}}{2\pi}\,
         \Bigl(\sqrt{1-\rho^2} + \rho\bigl(\pi-\arccos(\rho)\bigr)\Bigr)
     \Biggr)
   \Biggr].
\nonumber
\end{align}

The kernel $K_{\mathrm{mix}}$ is positive semidefinite because it is a convex combination (in $w$) of positive semidefinite kernels,
plus the bias-noise term $\sigma_b^2$; in particular, linear combinations with nonnegative coefficients preserve positive semidefiniteness.

\begin{proposition}[Identifiability of the mixed kernel]
\label{prop:ident_mix}

Consider the zero-mean GP with covariance $K_{\mathrm{mix}}$ in \eqref{eq:k_mix_reduced},
with parameter space
\[
\Theta = \bigl\{\,
\theta=(\sigma_a^2,\sigma_u^2,\sigma_b^2,\sigma_v^2,w,\alpha) \;:\;
\sigma_a^2,\sigma_u^2,\sigma_b^2,\sigma_v^2>0,\;
w\in(0,1),\;\alpha\in(0,1)
\,\bigr\}.
\]
Then the mapping
\[
\Theta \;\longrightarrow\; \mathcal{K}, 
\qquad \theta \mapsto K_{\mathrm{mix}}(\cdot,\cdot;\theta),
\]
is injective in the interior of $\Theta$. In particular, if for any $\theta_1,\theta_2\in\Theta$ we have
\[
K_{\mathrm{mix}}(\mathbf{x},\mathbf{x}';\theta_1)=K_{\mathrm{mix}}(\mathbf{x},\mathbf{x}';\theta_2),
\qquad \forall\,\mathbf{x},\mathbf{x}'\in\mathbb{R}^d,
\]
then necessarily $\theta_1=\theta_2$.
\end{proposition}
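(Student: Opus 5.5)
The plan is to reduce the kernel to an explicit function of a few scalar input statistics. We then probe it along special input configurations, read off coefficients of asymptotic expansions that are functionally independent, and invert the resulting system for $\theta$.

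\textbf{Reduction to scalar statistics.} By Appendix~\ref{apx:preactivation_moments}, with Gaussian $(a_j,u_{ji})$ the pre-activations are $Z=a+\mathbf u^\top\mathbf x$ and $Z'=a+\mathbf u^\top\mathbf x'$. Hence
\[
\sigma_z^2=\sigma_a^2+\sigma_u^2\|\mathbf x\|^2,\qquad \rho\,\sigma_z\sigma_{z'}=\sigma_a^2+\sigma_u^2\,\mathbf x^\top\mathbf x'.
\]
So $K_{\mathrm{mix}}(\mathbf x,\mathbf x';\theta)$ depends on the inputs only through $(\|\mathbf x\|^2,\|\mathbf x'\|^2,\mathbf x^\top\mathbf x')$. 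Equality of kernels for all inputs therefore gives equality of three explicit real-analytic functions along any curves we choose in these statistics.

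\textbf{Diagonal asymptotics.} Take $\mathbf x=\mathbf x'$ with $\|\mathbf x\|^2=t$, so $\rho=1$ and $s:=\sigma_z^2=\sigma_a^2+\sigma_u^2t$. Since $S(1)=1/2$,
\[
K(\mathbf x,\mathbf x)=\sigma_b^2+\sigma_v^2 w\,\tfrac{2}{\pi}\arcsin\!\Bigl(\tfrac{\pi s/2}{1+\pi s/2}\Bigr)+\sigma_v^2(1-w)\,c(\alpha)\,s,\qquad c(\alpha)=\alpha+\tfrac{(1-\alpha)^2}{2}.
\]
As $t\to\infty$, $\arcsin(1-\varepsilon)=\pi/2-\sqrt{2\varepsilon}+O(\varepsilon^{3/2})$ with $\varepsilon\asymp 1/s$. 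The expansion in $t$ therefore contains a linear term, a constant and a $t^{-1/2}$ term, which are distinct scales. From these we read off:
\begin{itemize}
\item the slope $A=\sigma_v^2(1-w)c(\alpha)\sigma_u^2$;
\item the constant $\sigma_b^2+\sigma_v^2w+\sigma_v^2(1-w)c(\alpha)\sigma_a^2$;
\item a coefficient proportional to $\sigma_v^2 w/\sigma_u$.
\end{itemize}

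\textbf{Antipodal asymptotics.} Take $\mathbf x'=-\mathbf x$, so $\rho\to-1$ and $S(-1)=0$. The linear growth coefficient is then $B=-\sigma_v^2(1-w)\alpha\sigma_u^2$. The ratio satisfies
\[
-B/A=\alpha/c(\alpha),\qquad \frac{d}{d\alpha}\,\frac{\alpha}{c(\alpha)}\propto \tfrac{(1-\alpha)^2}{2}+\alpha(1-\alpha)>0 \text{ on }(0,1),
\]
so the ratio is strictly increasing in $\alpha$, and $\alpha$ is identified.

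\textbf{Origin probe.} Take $\mathbf x=\mathbf 0$, so $Z=a$ and $\mathrm{Cov}(Z,Z')=\sigma_a^2$ for every $\mathbf x'$. First, $K(\mathbf 0,\mathbf 0)=\sigma_b^2+\sigma_v^2\bigl[w\tfrac{2}{\pi}\arcsin\bigl(\tfrac{\pi\sigma_a^2/2}{1+\pi\sigma_a^2/2}\bigr)+(1-w)c(\alpha)\sigma_a^2\bigr]$. Second, as $\|\mathbf x'\|\to\infty$ we have $\rho\to0$ and $S(\rho)\to1/(2\pi)$. The angular part of $K(\mathbf 0,\mathbf x')$ then grows like $\sigma_v^2(1-w)\tfrac{(1-\alpha)^2}{2\pi}\sigma_a\sigma_u\|\mathbf x'\|$, while the arcsin part tends to a constant.

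\textbf{Inversion.} With $\alpha$ known, dividing the $\|\mathbf x'\|$ coefficient from the origin probe by $A$ gives $\sigma_a/\sigma_u$. Combining the diagonal constant with $K(\mathbf 0,\mathbf 0)$ eliminates $\sigma_b^2$. What remains is a system in $(\sigma_u^2,\sigma_v^2,w)$:
\begin{itemize}
\item the product $\sigma_v^2(1-w)\sigma_u^2$ from $A$;
\item $\sigma_v^2 w/\sigma_u$ from the $t^{-1/2}$ term;
\item a combination involving $\sigma_v^2w$ and the arcsin evaluated at $\sigma_a^2$.
\end{itemize}
I would show this system has a unique positive solution by writing everything in terms of $\sigma_u$ (using $\sigma_a=r\sigma_u$ with $r$ known) and checking strict monotonicity of the resulting scalar equation. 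Finally, $\sigma_b^2$ is recovered from $K(\mathbf 0,\mathbf 0)$.

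\textbf{Main obstacle.} This last inversion is where I expect the difficulty: the arcsin term couples $\sigma_a$ and $w$ nonlinearly. If the monotonicity check fails, I would fall back on the exact functional form of $t\mapsto K(\mathbf x,\mathbf x)$ rather than its asymptotics. Its derivatives at $t=0$ and its behaviour on the whole half-line give infinitely many equations, and real-analyticity together with linear independence of $s\mapsto\arcsin(\cdot)$ and $s\mapsto s$ would separate $\sigma_v^2w$ from the linear part exactly. A further, easier concern is that the $t^{-1/2}$ coefficient must be computed carefully, including its constant, to be nonzero. This holds because $w>0$, and it is the reason the interior restriction $w\in(0,1)$ and $\alpha\in(0,1)$ is needed.
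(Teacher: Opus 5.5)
Your proposal is correct and takes a genuinely different route from the paper. The step you flag as the main obstacle does close, so you do not need the fallback.

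\textbf{Closing the flagged step.} Subtract $K(\mathbf 0,\mathbf 0)$ from the diagonal constant. The terms $\sigma_v^2(1-w)c(\alpha)\sigma_a^2$ cancel exactly, leaving $\sigma_v^2 w\,\tfrac{2}{\pi}\arccos(u)$ with $u=\frac{\pi\sigma_a^2/2}{1+\pi\sigma_a^2/2}\in(0,1)$. The $t^{-1/2}$ coefficient equals $-\tfrac{4}{\pi^{3/2}}\sigma_v^2 w/\sigma_u$, so $\sigma_v^2 w$ is a known multiple of $\sigma_u$. With $\sigma_a=r\sigma_u$ and $k=\pi/2$, you are left with $\sigma_u\arccos\bigl(\frac{kr^2\sigma_u^2}{1+kr^2\sigma_u^2}\bigr)$ equal to a known constant. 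Set $y=r\sigma_u$ and $\theta(y)=\arccos\bigl(\frac{ky^2}{1+ky^2}\bigr)\in(0,\pi/2)$. Then
$\frac{d}{dy}\bigl(y\,\theta(y)\bigr)=\theta-2\cos\theta\tan(\theta/2)=\theta-\frac{2\sin\theta\cos\theta}{1+\cos\theta}$,
and since $\sin\theta<\theta$ this exceeds $\theta-\frac{2\theta\cos\theta}{1+\cos\theta}=\theta\,\frac{1-\cos\theta}{1+\cos\theta}>0$. So $\sigma_u$ is determined uniquely. Then $\sigma_a$, $\sigma_v^2w$, $\sigma_v^2(1-w)$ (from the slope), hence $\sigma_v^2$ and $w$, and finally $\sigma_b^2$ all follow explicitly.

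\textbf{Comparison with the paper.} The paper argues structurally rather than asymptotically. It isolates $\sigma_b^2$ by varying the angle at fixed norm. It identifies $(\sigma_a^2,\sigma_u^2)$ from orthogonal equal-norm pairs through the formulas for $\rho(r)$ and $\sigma_z^2(r)$. It then matches $(\sigma_v^2,w,\alpha)$ through non-proportionality of the arcsine and ReLU-type terms in $\rho$. This is shorter and is organized around structural properties that would transfer to other component pairs. However, its key implications are asserted rather than derived:
\begin{itemize}
\item a constant can perfectly well be the difference of two non-constant functions;
\item equality of $F_{\theta_1}$ and $F_{\theta_2}$ on orthogonal pairs does not by itself force $\rho(r)$ and $\sigma_z^2(r)$ to agree, since $F$ also involves $(\sigma_v^2,w,\alpha)$;
\item because $\alpha$ may differ between $\theta_1$ and $\theta_2$, the last step really needs linear independence of the three functions $A$, $\rho\sigma_z\sigma_{z'}$ and $\sigma_z\sigma_{z'}S(\rho)$, not just non-proportionality of $A$ and $B(\cdot;\alpha)$.
\end{itemize}
Your probes replace each of these with an explicit coefficient read-off and an explicit inverse map. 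They use only inputs on a single line (diagonal, antipodal, origin), so they also cover $d=1$, where no orthogonal pairs of positive norm exist. They also show exactly where the interior assumptions enter: $w>0$ gives a nonvanishing $t^{-1/2}$ term, while $w<1$ and $\alpha<1$ give a nonvanishing slope and $\|\mathbf x'\|$-growth. The price is that your argument is tied to the specific closed forms and their behaviour at unbounded inputs.
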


\paragraph{Proof idea.}
The complete proof is given in Appendix~\ref{apx:ident_mix_proof}.
The argument exploits families of pairs $(\mathbf x,\mathbf x')$ to separate parameters:
(i) by varying the angle between vectors of fixed norm, the constant term forces $\sigma_{b,1}^2=\sigma_{b,2}^2$;
(ii) using orthogonal pairs with $\|\mathbf x\|=\|\mathbf x'\|=r$ and varying $r$, one identifies $(\sigma_a^2,\sigma_u^2)$ via
the induced expressions for $\rho(r)$ and $\sigma_z^2(r)$; (iii) with these fixed, equality of the kernel implies equality
of the remaining parameters $(\sigma_v^2,w,\alpha)$, yielding $\theta_1=\theta_2$ in the interior of $\Theta$.
Boundary cases ($w\in\{0,1\}$ or $\alpha\in\{0,1\}$) are excluded since they may remove parameters from the kernel.

\subsection{Stratified input designs and practical identifiability}
\label{sec:stratification_ident}

Proposition~\ref{prop:ident_mix} establishes theoretical identifiability of
$K_{\mathrm{mix}}(\cdot,\cdot;\theta)$ in the interior of $\Theta$. In practice, however, identifiability can become
numerically fragile in moderate-to-high dimension when inputs $\{\mathbf x_i\}_{i=1}^n$ are sampled \emph{i.i.d.}.
The concentration of norms and angles around typical values, which is characteristic of high-dimensional spaces,
implies that the covariance matrix $\mathbf K(\theta)$ is evaluated only over a limited range of radial and angular variation
\cite{ledoux2001concentration,vershynin2018highdimprob}. As a result, the effectively observable information for
distinguishing kernel components is reduced, which can lead to numerically ill-conditioned estimation
\cite{beyer1999nearest,aggarwal2001surprising,rasmussen2006gaussian}.

Let $\mathbf K(\theta)\in\mathbb R^{n\times n}$ be the matrix with entries
$\mathbf K_{ij}(\theta)=K_{\mathrm{mix}}(\mathbf x_i,\mathbf x_j;\theta)$.
Information about $\theta$ is determined by two geometric aspects of the input set.
The first is the dispersion of the norms $\|\mathbf x_i\|$, which governs marginal variances,
and the second is the dispersion of the angles between pairs $(\mathbf x_i,\mathbf x_j)$, which governs covariances.
These two sources of variability feed, respectively, the diagonal and off-diagonal terms of $\mathbf K(\theta)$, as follows.

\begin{itemize}
\item Diagonal (radial information):
For the models considered, the pre-activation variance satisfies
$\sigma_z^2(\mathbf x)=\sigma_u^2\|\mathbf x\|^2+\sigma_a^2$.
Thus, separating $(\sigma_u^2,\sigma_a^2)$ depends on observing sufficient dispersion in $\|\mathbf x_i\|^2$
across inputs.

\item Off-diagonal (angular information):
For $i\neq j$, the \textit{kernel} depends on the pre-activation correlation
$\rho(\mathbf x_i,\mathbf x_j)$, which is a monotone function of the cosine similarity
$\tilde\rho(\mathbf x_i,\mathbf x_j)=\mathbf x_i^\top\mathbf x_j/(\|\mathbf x_i\|\|\mathbf x_j\|)$.
Therefore, distinguishing the smooth component $A(\rho)$ from the angular component $B(\rho;\alpha)$ requires
observing $\rho$ over a sufficiently wide range.
\end{itemize}

Under \emph{i.i.d.} sampling in high dimension, both $\|\mathbf x\|^2/I$ and $\tilde\rho(\mathbf x,\mathbf x')$
concentrate around typical values, with fluctuations that decay as $O(I^{-1/2})$
\cite{ledoux2001concentration,vershynin2018highdimprob}. In this regime, different values of $\theta$ may induce
very similar matrices $\mathbf K(\theta)$ in finite samples, which hampers numerical separation between kernel components
and can degrade estimation stability \cite{beyer1999nearest,aggarwal2001surprising,rasmussen2006gaussian}.

To make identifiability operational in practice, we compare two input designs,
a uniform (\emph{i.i.d.}) design and a stratified design.
The stratified design preserves variation along the two relevant axes of the \textit{kernel},
with radial stratification to increase dispersion in $\|\mathbf x\|$ and angular stratification to increase the diversity
of similarities between pairs of points. This choice allows us to study, in a controlled way, how the geometry of the input design
affects numerical identifiability and estimation stability in the simulation scenarios, isolating this effect from other factors.
This type of construction is consistent with recommendations for \emph{space-filling designs} to avoid clustering,
reduce voids, and increase coverage of the input domain \cite{santner2018design}.

It is important to emphasize that this stratification is not proposed as a strategy for real datasets,
where the inputs are already given and, in general, cannot be redesigned.
Here, it is used only in the simulations as a tool to investigate, in high dimension and in a controlled manner,
how the geometry of the input design affects numerical identifiability and estimation stability.
Supporting calculations and implementation details are provided in Supplementary Section S2.2.

\section{MAP estimation via Nyström and Bayesian prediction}
\label{sec:estimacao_map_nystrom}

In this section, we describe MAP estimation of the hyperparameters of the GP induced by the mixed kernel,
and we introduce the Nyström approximation to make the procedure scalable when $n$ is large.

\subsection{Optimization via MAP}
\label{subsec:map_opt}

Let $\mathcal{D}=\{(\mathbf x_i,y_i)\}_{i=1}^n$ and let $\theta$ denote the set of model hyperparameters.
MAP estimation is equivalent to minimizing the objective (\emph{log-posterior density})
\begin{equation}
\mathcal{L}_{\mathrm{MAP}}(\theta)
=
- \log p(\mathbf y \mid \mathbf X,\theta)\;-\;\log p(\theta),
\label{eq:map_obj}
\end{equation}
where the first term is the \emph{log-likelihood} induced by the GP measure and the second term is the contribution of the \emph{log-prior density}
(acting as probabilistic regularization). Updates are performed by gradient descent:
\[
\theta^{(t+1)}=\theta^{(t)}-\eta\,\nabla_\theta\,\mathcal{L}_{\mathrm{MAP}}(\theta).
\]
In practice, gradients are obtained by automatic differentiation applied to the numerical construction of $\widetilde{\mathbf K}_\theta$
(e.g., via a Cholesky factorization or low-rank approximations).

\subsection{GP log-likelihood}
\label{subsec:gp_mll}

We assume a regression model with independent Gaussian noise:
\[
y_i=f(\mathbf x_i)+\varepsilon_i,\qquad \varepsilon_i\sim\mathcal N(0,\sigma_\epsilon^2),
\]
and, under a zero-mean GP with kernel $K_{\mathrm{mix}}(\cdot,\cdot;\theta)$, we have
\[
\mathbf f\mid \mathbf X,\theta \sim \mathcal N(\mathbf 0,\mathbf K_\theta),
\qquad
\mathbf y\mid \mathbf X,\theta \sim \mathcal N(\mathbf 0,\widetilde{\mathbf K}_\theta),
\]
where
\begin{equation}
\widetilde{\mathbf K}_\theta=\mathbf K_\theta+\sigma_\epsilon^2\mathbf I,
\qquad
[\mathbf K_\theta]_{ij}=K_{\mathrm{mix}}(\mathbf x_i,\mathbf x_j;\theta).
\label{eq:Ktilde_def}
\end{equation}
Therefore, the log-likelihood takes the standard GP form:
\begin{equation}
\log p(\mathbf y\mid \mathbf X,\theta)
=
-\frac12\,\mathbf y^\top \widetilde{\mathbf K}_\theta^{-1}\mathbf y
-\frac12\,\log\det(\widetilde{\mathbf K}_\theta)
-\frac{n}{2}\log(2\pi).
\label{eq:gp_mll}
\end{equation}

\paragraph{Mixed-kernel form (reduced model).}
In our case,
\begin{equation}
\mathbf K_\theta
=
\sigma_b^2\,\mathbf 1\mathbf 1^\top
+
\sigma_v^2\Big[
w\,\mathbf K_{\mathrm{smooth}}(\sigma_a^2,\sigma_u^2)
+
(1-w)\,\mathbf K_{\mathrm{angular}}(\sigma_a^2,\sigma_u^2,\alpha)
\Big],
\label{eq:Ktheta_kmix_body}
\end{equation}
with $\theta=\{\sigma_\epsilon^2,\sigma_a^2,\sigma_u^2,\sigma_b^2,\sigma_v^2,\alpha,w\}$.

\subsection{MAP loss and gradients (standard form)}
\label{subsec:map_loss_grad}

With the priors
\[
\sigma_q^2\sim \mathrm{Inv\text{-}Gamma}(a_q,b_q),
\quad q\in\{\epsilon,a,u,b,v\},
\qquad
\alpha\sim\mathrm{Beta}(a_\alpha,b_\alpha),
\qquad
w\sim\mathrm{Beta}(a_w,b_w),
\]
the MAP loss (up to additive constants) is
\begin{align}
\mathcal L_{\mathrm{MAP}}(\theta)
&=
\frac12\,\mathbf y^\top \widetilde{\mathbf K}_\theta^{-1}\mathbf y
+\frac12\,\log\det(\widetilde{\mathbf K}_\theta)
+\sum_{q\in\{\epsilon,a,u,b,v\}}\Big[(a_q+1)\log\sigma_q^2 + \frac{b_q}{\sigma_q^2}\Big]
\notag\\
&\quad
-\,(a_\alpha-1)\log\alpha \;-\; (b_\alpha-1)\log(1-\alpha)
-\,(a_w-1)\log w \;-\; (b_w-1)\log(1-w).
\label{eq:map_loss_body}
\end{align}

For the likelihood term in \eqref{eq:gp_mll}, the gradient with respect to a generic parameter $\theta_j$ can be written compactly as
\begin{equation}
\frac{\partial}{\partial \theta_j}\Big(-\log p(\mathbf y\mid \mathbf X,\theta)\Big)
=
\frac12\,\mathrm{tr}\!\Big[
\big(\widetilde{\mathbf K}_\theta^{-1}\mathbf y\mathbf y^\top \widetilde{\mathbf K}_\theta^{-1}
-\widetilde{\mathbf K}_\theta^{-1}\big)\;
\frac{\partial \widetilde{\mathbf K}_\theta}{\partial \theta_j}
\Big].
\label{eq:grad_mll_standard}
\end{equation}
The specific derivatives $\partial \widetilde{\mathbf K}_\theta/\partial \theta_j$
(e.g., for $w$, $\alpha$, and the variance parameters) follow directly from
\eqref{eq:Ktilde_def}--\eqref{eq:Ktheta_kmix_body} and are listed in Supplementary Material (Section S.3) for reference.
From \eqref{eq:Ktheta_kmix_body},
\[
\frac{\partial \widetilde{\mathbf K}_\theta}{\partial w}
=
\sigma_v^2\big(\mathbf K_{\mathrm{smooth}}-\mathbf K_{\mathrm{angular}}\big),
\]
and the corresponding gradient follows from \eqref{eq:grad_mll_standard}.

\subsection{Nyström approximation}
\label{map_subsec:nystrom}

Let $\mathbf K \in \mathbb{R}^{n\times n}$ denote the covariance matrix induced by a kernel
$K_\theta(\cdot,\cdot)$. In this work, we use the approach proposed in \cite{banerjee2013efficient},
which we refer to as the \textit{Nyström} method, as it approximates $\mathbf K$ by a low-rank
decomposition constructed from a subset $S \subset \{1,\ldots,n\}$ with $|S|=r$ (the ``anchors'').

As shown in Eq.~\ref{eq:map_loss_body}, MAP optimization involves $\log\det(\widetilde{\mathbf K}_\theta)$
and products with $\widetilde{\mathbf K}_\theta^{-1}$. Therefore, we apply the Nyström approximation
directly to the noisy covariance matrix $\widetilde{\mathbf K}_\theta$:

\begin{equation}
\label{eq:nystrom_noise}
\widetilde{\mathbf K}_\theta^{(r)}
=
\widetilde{\mathbf K}_{\theta,:S}\,
\widetilde{\mathbf K}_{\theta,SS}^{-1}\,
\widetilde{\mathbf K}_{\theta,S:},
\qquad
\widetilde{\mathbf K}_\theta=\mathbf K_\theta+\sigma_\epsilon^2\mathbf I,
\qquad
S \;\text{with}\; |S|=r.
\end{equation}
where $S\subset\{1,\dots,n\}$ is a subset of size $r$.
With this construction, the dominant computational cost (e.g., forming the required terms and solving
the associated linear systems) typically reduces to $\mathcal{O}(n r^2)$ instead of $\mathcal{O}(n^3)$. Anchors $S$ can be selected by different strategies, such as random sampling, \textit{k-means++}
initialization (or centroids obtained via \textit{k}-means) \cite{arthur2007kmeans},
or randomized sampling/projection techniques that approximate the column space of $\mathbf K$
\cite{rahimi2007random,halko2011finding}. In this work, we adopt two criteria depending on the
experimental context:
\begin{itemize}
    \item \textit{first}, where $S=\{1,\ldots,r\}$ (the first $r$ points), used in simulations (via Vecchia approximation) due to its
simplicity and reproducibility; or
    \item \textit{k-means++}, which selects the first centroid uniformly at random from the data and then
selects each subsequent centroid with probability proportional to the squared distance to the nearest
already-chosen centroid \cite{arthur2007kmeans}.
\end{itemize}

In summary, the Nyström approximation avoids the exact factorization of an $n\times n$ matrix---the main
source of the $\mathcal{O}(n^3)$ cost in GPs---by replacing it with low-rank operations. This makes
it practical to compute $\log\det(\widetilde{\mathbf K}_\theta)$ and solve $\widetilde{\mathbf K}_\theta
\mathbf u=\mathbf y$ within MAP optimization on large-scale datasets.

\subsection{Bayesian prediction}


In a fully Bayesian analysis, uncertainty about the model parameters $\theta$ is propagated to prediction through the posterior predictive distribution:
\[
p(y_\star \mid \mathcal{D}, \mathbf{x}_\star)
=
\int p(y_\star \mid \mathbf{x}_\star, \theta)\, p(\theta \mid \mathcal{D}) \, d\theta.
\]
Thus, prediction is not obtained by conditioning on a single point estimate of $\theta$, but by averaging the conditional predictions $p(y_\star \mid \mathbf{x}_\star, \theta)$ weighted by the posterior distribution $p(\theta \mid \mathcal{D})$. As a result, predictive intervals reflect both the intrinsic variability of the model and uncertainty about $\theta$. Although conceptually appealing, this approach is typically infeasible in large-scale problems, since evaluating the posterior predictive distribution requires repeatedly recomputing quantities based on the covariance matrix $\widetilde{\mathbf K}_\theta$, including costly terms involving $\widetilde{\mathbf K}_\theta^{-1}$.

To balance uncertainty quantification and computational cost, we estimate the parameters via MAP, obtaining a point estimate $\hat{\theta}$, and perform prediction conditional on $\hat{\theta}$. Although this does not explicitly integrate over parameter uncertainty, predictions remain probabilistic because the GP provides a full predictive distribution for $y_\star$. Accordingly, let $\mathbf{X}\in\mathbb{R}^{n\times I}$ denote the training inputs and $\mathbf{y}\in\mathbb{R}^{n}$ the observed responses. We define
\[
\widetilde{\mathbf{K}}_{\hat{\theta}}
=
\mathbf{K}_{\hat{\theta}}+\hat{\sigma}_\epsilon^2\,\mathbf{I},
\qquad
[\mathbf{K}_{\hat{\theta}}]_{ij}
=
K_{\mathrm{mix}}(\mathbf{x}_i,\mathbf{x}_j;\hat{\theta}).
\]
For a new input $\mathbf{x}_\star$, we let
\[
\mathbf{k}_\star =
\big(
K_{\mathrm{mix}}(\mathbf{x}_\star,\mathbf{x}_1;\hat{\theta}),
\ldots,
K_{\mathrm{mix}}(\mathbf{x}_\star,\mathbf{x}_n;\hat{\theta})
\big)^\top,
\qquad
k_{\star\star}
=
K_{\mathrm{mix}}(\mathbf{x}_\star,\mathbf{x}_\star;\hat{\theta}).
\]
Then, the posterior predictive distribution is Gaussian:
\begin{equation}
\label{eq:gp_pred_pt}
y_\star \mid \mathbf{y},\mathbf{X},\mathbf{x}_\star,\hat{\theta}
\sim
\mathcal{N}\!\big(\mu_\star,\sigma_\star^2\big),
\end{equation}
with mean and variance given by
\begin{equation}
\label{eq:gp_pred_mom_pt}
\mu_\star
=
\mathbf{k}_\star^\top
\widetilde{\mathbf{K}}_{\hat{\theta}}^{-1}
\mathbf{y},
\qquad
\sigma_\star^2
=
k_{\star\star}
-
\mathbf{k}_\star^\top
\widetilde{\mathbf{K}}_{\hat{\theta}}^{-1}
\mathbf{k}_\star
+
\hat{\sigma}_\epsilon^2.
\end{equation}

Note that, given $\hat{\theta}$, GP prediction requires manipulating the covariance matrix $\widetilde{\mathbf K}_{\hat{\theta}}$. Computing and manipulating this matrix exactly is impractical at large scale. Hence, although the predictive expressions are written in terms of $\widetilde{\mathbf K}_{\hat{\theta}}^{-1}$, in practice we replace $\widetilde{\mathbf K}_{\hat{\theta}}$ by its Nyström approximation $\widetilde{\mathbf K}_{\hat{\theta}}^{(r)}$ (Eq.~\ref{eq:nystrom_noise}) and compute the products $\big(\widetilde{\mathbf K}_{\hat{\theta}}^{(r)}\big)^{-1}\mathbf y$ and $\big(\widetilde{\mathbf K}_{\hat{\theta}}^{(r)}\big)^{-1}\mathbf k_\star$ via low-rank algebra (equivalently, by solving linear systems in the approximated space), avoiding explicit matrix inversion and enabling scalable evaluation of predictive means and variances \cite{banerjee2013efficient}.

The proposed GP model, conditioned on $\hat{\theta}$ and using the Nyström approximation, offers important advantages over finite NNs. First, whereas in finite networks the number of parameters grows with the number of neurons $H$, the GP limit is governed by a small set of parameters, such as the variances $(\sigma_a^2,\sigma_u^2,\sigma_b^2,\sigma_v^2,\sigma_\epsilon^2)$ and, when applicable, parameters indexing the activation function, which in turn determine the covariance function. Second, uncertainty quantification is more faithful and robust, especially outside the data domain. In deterministic NNs, prediction is purely pointwise and there is no predictive distribution. Moreover, in finite NNs with a nugget term, predictive variance tends to be approximately constant and dominated by the noise term, remaining small even when $\mathbf{x}_\star$ is far from the training data. In contrast, in the proposed GP, predictive variance includes an epistemic component that naturally increases as $\mathbf{x}_\star$ becomes less correlated with the training inputs, approaching the prior variance in regions poorly informed by data. Thus, even with parameters fixed via MAP, the GP consistently captures both aleatoric and epistemic uncertainty.

Finally, the predictive mean $\mu_\star$ and predictive variance $\sigma_\star^2$ (Eqs.~\ref{eq:gp_pred_pt}--\ref{eq:gp_pred_mom_pt}) provide the basis for computing the evaluation metrics: the Mean absolute error (MAE) and the Root mean squared error (RMSE) assess point-prediction accuracy using $\mu_\star$, whereas the Mean expected squared error (MESE) and the Standard deviation of the expected squared error (SDESE) jointly assess accuracy and the quality of uncertainty quantification by explicitly incorporating $\sigma_\star^2$ (see Section \ref {sec:training_evaluation_protocol} for a formal definition of those metrics). Under the Gaussian assumption, the expected squared error decomposes as
\[
\mathbb{E}\!\left[(Y_\star-y_\star)^2 \mid \mathcal{D},\mathbf{x}_\star,\hat{\theta}\right]
=
(\mu_\star-y_\star)^2+\sigma_\star^2,
\]
i.e., the sum of the predictive-mean error and the predictive uncertainty, reflecting both accuracy and the reliability of the reported uncertainty.

\section{Experimental Setup}
\label{sec:experimentos}

We evaluate the proposed method on both simulated data and two public regression datasets. In the simulations, we vary the sample size \(n\), the input dimension \(I\), and the sampling design for \(X\) (uniform vs.\ stratified), and we compare anchor-selection strategies for the Nyström approximation. On real data, we report predictive performance and total runtime on \textit{Superconductivity} and \textit{YearPredictionMSD}, including a scalability study on the full \textit{YearPredictionMSD} dataset by varying the Nyström rank \(r\).

\subsection{Simulated data: generation, scenarios, and replications}
\label{sec:dados_simulados}

We consider eight experimental scenarios. In each scenario, inputs $\mathbf{x}_i \in [0,1]^I$ are generated under two sampling designs:
(i) uniform and (ii) stratified over the unit hypercube. We then apply a centering transformation,
\( \tilde{\mathbf{x}}_i=\mathbf{x}_i-0.5, \)
ensuring $\tilde{\mathbf{x}}_i\in[-0.5,0.5]^I$, consistent with the model parameterization. The latent function is defined as a Gaussian process
$f\sim\mathcal{GP}(0,K_{\mathrm{mix}})$. Observations are obtained as
\[
y(\tilde{\mathbf{x}})=f(\tilde{\mathbf{x}})+\varepsilon(\tilde{\mathbf{x}}),
\qquad \varepsilon(\tilde{\mathbf{x}})\stackrel{iid}{\sim}\mathcal N(0,\sigma_\varepsilon^2),
\]
with $\varepsilon$ assumed independent of $f$. The \emph{nugget} $\varepsilon$ is calibrated per scenario as a fixed fraction $\eta$ of the average
marginal variance of the kernel evaluated at the input locations:
\begin{equation}
\sigma_\varepsilon^2=\eta\,\overline K,
\qquad
\overline K=\frac{1}{n}\sum_{i=1}^n K_{\mathrm{mix}}(\tilde{\mathbf x}_i,\tilde{\mathbf x}_i),
\label{eq:nugget_frac}
\end{equation}
with $\eta=0.04$. In the implementation, $\overline K$ is computed in \emph{batches} without forming the full $n\times n$ matrix.
To enable scalable GP sampling in regimes with large \(n\) without incurring cubic cost, we generate \(f\) using a sequential
\textit{Vecchia}-type construction: we sample an initial exact block of size $N_{\text{init}}=500$ via Cholesky, and for each new point,
we sample the univariate conditional distribution based on the $N_{\mathrm{viz}}=500$ nearest neighbors among the previously simulated ones. This procedure is used
\emph{only} in the simulation stage (data generation). We vary $n\in\{10{.}000,20{.}000,50{.}000\}$, $I\in\{20,80\}$ and, for $n=50{.}000$,
the design of $X$ (uniform vs.\ stratified), yielding eight scenarios (C1--C8) summarized in Table~\ref{tab:cenarios_simulacao_kmix}.
To quantify variability induced by the randomness of the latent process and observation noise, we perform $R=20$ replications for scenario C1,
keeping the inputs fixed and varying only \texttt{seed\_y}. A summary of the replications is reported in the Supplementary Material (Table S.1),
and full details are provided in the Supplementary Material (Section S.5).

\begin{table}[!ht]
\centering
\caption{Simulation scenarios using a \textit{Vecchia}-type construction with a nugget automatically calibrated from $\overline{K_{\mathrm{mix}}(x,x)}$.}
\label{tab:cenarios_simulacao_kmix}
\renewcommand{\arraystretch}{1.15}
\resizebox{\linewidth}{!}{%
\begin{tabular}{l l r r r r r r}
\toprule
\textbf{Scenario} & \textbf{Design of $X$} & \textbf{$I$} & \textbf{$n$} &
$\boldsymbol{\overline{K_{\mathrm{mix}}(x,x)}}$ &
\textbf{Nugget $(\sigma_\varepsilon)$} &
$\boldsymbol{\sigma_\varepsilon^2}$ &
\textbf{Time (s)} \\
\midrule
C1 & Uniform      & 20 & 10000 & 2.132745 & 0.292078 & 0.085310 & 63.72 \\
C2 & Uniform      & 80 & 10000 & 3.769967 & 0.388328 & 0.150799 & 34.90 \\
C3 & Uniform      & 20 & 20000 & 2.131981 & 0.292026 & 0.085279 & 68.46 \\
C4 & Uniform      & 80 & 20000 & 3.769044 & 0.388281 & 0.150762 & 74.07 \\
C5 & Uniform      & 20 & 50000 & 2.131707 & 0.292007 & 0.085268 & 182.54 \\
C6 & Uniform      & 80 & 50000 & 3.768044 & 0.388229 & 0.150722 & 215.66 \\
C7 & Stratified   & 20 & 50000 & 2.132325 & 0.292050 & 0.085293 & 186.13 \\
C8 & Stratified   & 80 & 50000 & 3.772796 & 0.388474 & 0.150912 & 227.50 \\
\bottomrule
\end{tabular}%
}
\end{table}

Table~\ref{tab:cenarios_simulacao_kmix} defines the eight simulation regimes by varying \(n\), the input dimension \(I\), and the design of \(\mathbf{X}\).
Automatically calibrating the \emph{nugget} as a fixed fraction of the average marginal variance of \(K_{\mathrm{mix}}(\mathbf{x},\mathbf{x})\)
standardizes the noise scale across scenarios, avoiding ad hoc choices when \(K_{\mathrm{mix}}\) changes with \(I\) or with the sampling design.

\subsection{Real data and preprocessing}
\label{sec:real_data}

We evaluate the method on the public regression benchmarks \textit{Superconductivity} and \textit{YearPredictionMSD}.
For \textit{YearPredictionMSD}, we analyze random subsamples with $n\in\{10{,}000,20{,}000,50{,}000\}$ and, additionally,
the full dataset while varying the Nyström rank $r$, in order to characterize the cost--accuracy trade-off of the approximation.

Let $X\in\mathbb{R}^{n\times I}$ denote the covariate matrix and $y\in\mathbb{R}^n$ the response vector.
We rescale each column of $X$ to $[0,1]^I$ using min--max normalization (fit on the training set and applied to the test set),
and we center the inputs via $\tilde X = X - 0.5$.
We standardize the target on the training set via $\tilde y=(y-\mu_y)/\sigma_y$ and de-standardize predictions to report all metrics on the original scale.
All datasets have no missing values; hence no imputation was required.

\subsection{Training and evaluation protocol}
\label{sec:training_evaluation_protocol}

To ensure direct comparability across scenarios, we adopt a fixed-budget training protocol: we train the model for $50$ epochs in
all configurations, keeping all other optimization choices unchanged.
Model fitting is performed via MAP estimation (Eq.~\ref{eq:map_loss_body}), by minimizing
\[
\mathcal L_{MAP}^{GP}(\theta)=\mathrm{NLL}(\theta)-\log p(\theta),
\qquad
\mathrm{NLL}(\theta)=-\log p(\mathbf y\mid \mathbf X,\theta),
\]
where $p(\theta)$ denotes the prior.

For scalability, we use the \textit{Nyström} approximation applied to the mixed kernel $K_{\text{mix}}$.
In the simulation scenarios, we fix the rank at $r=500$.
For real data, we use $r=500$ on the full \textit{Superconductivity} dataset and on the \textit{YearPredictionMSD} subsamples,
so as to keep computational cost comparable across configurations.
Only in the experiment on the full \textit{YearPredictionMSD} dataset do we vary $r$, to characterize the cost--accuracy trade-off.
Anchor selection is performed either by \textit{first} or by \textit{$k$-means}, depending on the experiment.

In all runs, we optimize the parameters with Adam (learning rate $10^{-3}$), computing objective terms with mini-batches. The prior term is always included in the objective.
The nugget is estimated jointly with the remaining parameters, with a dedicated learning rate of $10^{-3}$,
and is automatically initialized using the rule in Eq.~\ref{eq:nugget_frac} (with $\eta=0.04$).
When available, kernel parameters are initialized with the values used in the simulation.

Across all datasets, we use reproducible splits (shared random seeds) and reserve $10\%$ for testing. After training on $\mathcal{D}_{\text{train}}$, we evaluate predictive performance on the held-out test set. For each test input $\mathbf{x}_i$, the fitted GP yields a Gaussian predictive distribution
$Y_i \mid \mathcal{D}_{\text{train}},\mathbf{x}_i \sim \mathcal{N}(\hat y_i,\hat\sigma_i^2)$.
Let $(\hat y_i,\hat\sigma_i^2)$ denote the predictive mean and variance for each test point $(\mathbf{x}_i,y_i)$, $i=1,\dots,n_{\text{test}}$.
Then, we compute point-accuracy metrics,
\[
\mathrm{MAE}=\frac{1}{n_{\text{test}}}\sum_{i=1}^{n_{\text{test}}}\lvert \hat y_i-y_i\rvert,
\qquad
\mathrm{MSE}=\frac{1}{n_{\text{test}}}\sum_{i=1}^{n_{\text{test}}}(\hat y_i-y_i)^2,
\qquad
\mathrm{RMSE}=\sqrt{\mathrm{MSE}}.
\]
To jointly assess accuracy and uncertainty, we use the Expected squared error (ESE) under the predictive distribution,
\[
\mathrm{ESE}_i
=
\mathbb{E}\!\left[(Y_i-y_i)^2 \mid \mathcal{D}_{\text{train}},\mathbf{x}_i\right]
=
(\hat y_i-y_i)^2+\hat\sigma_i^2,
\]
and report
\[
\mathrm{MESE}=\frac{1}{n_{\text{test}}}\sum_{i=1}^{n_{\text{test}}}\mathrm{ESE}_i,
\qquad
\mathrm{SDESE}=\sqrt{\frac{1}{n_{\text{test}}-1}\sum_{i=1}^{n_{\text{test}}}\big(\mathrm{ESE}_i-\mathrm{MESE}\big)^2}.
\]
For real datasets, in addition to metrics on the original target scale, we also report normalized versions based on the training-target scale to enable comparisons across datasets with different units (Supplementary Material S5.4).

Overall, the experimental design and protocol above define a reproducible procedure to compare, in a controlled manner, the effect of the simulation regime, the \textit{Nyström} anchor-selection strategy, and, in the large-scale setting, the rank $r$.
In the next section, we report MAP estimates, test-set predictive metrics, and total time, highlighting the cost--accuracy trade-off induced by the scalable approximation.

\subsection{Results}

\subsubsection{Simulation results}

The Tables~\ref{tab:map_params_50eps_anchors} and \ref{tab:test_metrics_50eps_anchors} disentangle two effects:
(i) the stability of the MAP estimates of the \textit{kernel}, and
(ii) the impact of the anchor-selection strategy on the quality of the Nyström approximation.
Across all scenarios, the estimated structural parameters \((\hat\sigma_b^2,\hat\sigma_v^2,\hat\sigma_u^2,\hat\sigma_a^2,\hat\alpha,\hat w)\)
remain very similar under first and \(k\)-means, suggesting stable optimization under a fixed training budget,
with no evidence of collapse to degenerate cases (e.g., \(w\in\{0,1\}\)).

In contrast, the test-set metrics highlight a clear cost--accuracy trade-off. In more challenging settings (notably \(I=80\)),
\(k\)-means anchor selection tends to yield more noticeable reductions in MAE/RMSE, whereas in lower dimension the differences are small.
This pattern is consistent with the role of anchors in Nyström, since better coverage of the input space improves the approximation of the correlated
component of the process. As expected, these gains may come at increased computational cost due to clustering, which is reflected in the total runtime.

\begin{table}[!ht]
\centering
\caption{MAP estimates under two anchor-selection strategies in Nyström.}
\label{tab:map_params_50eps_anchors}
\renewcommand{\arraystretch}{0.60} 
\setlength{\tabcolsep}{10pt}        
\resizebox{\textwidth}{!}{%
\begin{tabular}{llrrrrrrr}
\toprule
Scenario & Anchors & $\hat{\sigma}_b^2$ & $\hat{\sigma}_v^2$ & $\hat{\sigma}_u^2$ & $\hat{\sigma}_a^2$ & $\hat{\alpha}$ & $\hat{w}$ & $\hat{\sigma}_\epsilon^2$ \\
\midrule
C1 & first   & 1.025371 & 0.980080 & 0.981301 & 0.994972 & 0.508588 & 0.495693 & $8.56\times10^{-2}$ \\
C1 & $k$-means & 1.026216 & 0.987962 & 0.995776 & 0.987359 & 0.508575 & 0.496666 & $8.29\times10^{-2}$ \\
\midrule
C2 & first   & 1.015887 & 1.031211 & 1.030972 & 0.968986 & 0.487482 & 0.512434 & $1.64\times10^{-1}$ \\
C2 & $k$-means & 1.028323 & 1.032194 & 1.032033 & 0.968549 & 0.487544 & 0.511323 & $1.63\times10^{-1}$ \\
\midrule
C3 & first   & 1.028055 & 0.971542 & 0.972107 & 0.983965 & 0.511472 & 0.491441 & $8.64\times10^{-2}$ \\
C3 & $k$-means & 1.023113 & 0.974400 & 0.976894 & 0.996529 & 0.510373 & 0.490998 & $8.18\times10^{-2}$ \\
\midrule
C4 & first   & 0.999669 & 1.031572 & 1.031348 & 0.968746 & 0.487415 & 0.511870 & $1.64\times10^{-1}$ \\
C4 & $k$-means & 1.018052 & 1.031881 & 1.031831 & 0.968938 & 0.487784 & 0.509282 & $1.63\times10^{-1}$ \\
\midrule
C5 & first   & 1.028704 & 0.971421 & 0.971586 & 0.994188 & 0.511165 & 0.490527 & $8.34\times10^{-2}$ \\
C5 & $k$-means & 1.025100 & 0.969336 & 0.969129 & 1.001332 & 0.511568 & 0.489583 & $8.21\times10^{-2}$ \\
\midrule
C6 & first   & 1.001530 & 1.031871 & 1.031808 & 0.969091 & 0.487441 & 0.512053 & $1.64\times10^{-1}$ \\
C6 & $k$-means & 1.014398 & 1.031409 & 1.031123 & 0.969269 & 0.487602 & 0.511393 & $1.64\times10^{-1}$ \\
\midrule
C7 & first   & 1.025906 & 0.971412 & 0.971880 & 1.002389 & 0.511691 & 0.489485 & $8.32\times10^{-2}$ \\
C7 & $k$-means & 1.024638 & 0.971043 & 0.972473 & 1.013075 & 0.512253 & 0.489012 & $8.33\times10^{-2}$ \\
\midrule
C8 & first   & 0.983979 & 1.031795 & 1.031675 & 0.968993 & 0.487443 & 0.512017 & $1.64\times10^{-1}$ \\
C8 & $k$-means & 1.018084 & 1.031796 & 1.031743 & 0.969018 & 0.487622 & 0.511714 & $1.64\times10^{-1}$ \\
\bottomrule
\end{tabular}%
}
\end{table}

\begin{table}[!ht]
\centering
\caption{Total time and test-set metrics under two Nyström anchor-selection strategies.}
\label{tab:test_metrics_50eps_anchors}
\renewcommand{\arraystretch}{0.60} 
\setlength{\tabcolsep}{10pt} 
\resizebox{\textwidth}{!}{%
\begin{tabular}{llrrrrrr}
\toprule
Scenario & Anchors & Total time (s) & MAE & MSE & RMSE & MESE & SDESE \\
\midrule
C1 & first    & 1.334 & 0.265983 & 0.112261 & 0.335053 & 0.236112 & 0.165608 \\
C1 & $k$-means  & 2.155 & 0.263822 & 0.110102 & 0.331816 & 0.233065 & 0.161552 \\
\midrule
C2 & first    & 1.284 & 0.573294 & 0.504148 & 0.710034 & 0.892681 & 0.665183 \\
C2 & $k$-means  & 2.514 & 0.534228 & 0.440660 & 0.663822 & 0.829185 & 0.585415 \\
\midrule
C3 & first    & 1.881 & 0.265764 & 0.110175 & 0.331926 & 0.233302 & 0.152864 \\
C3 & $k$-means  & 2.137 & 0.264066 & 0.109563 & 0.331004 & 0.229287 & 0.153301 \\
\midrule
C4 & first    & 2.101 & 0.571924 & 0.515084 & 0.717694 & 0.903406 & 0.718319 \\
C4 & $k$-means  & 2.334 & 0.562438 & 0.498106 & 0.705766 & 0.886464 & 0.707048 \\
\midrule
C5 & first    & 1.623 & 0.269528 & 0.114101 & 0.337789 & 0.233814 & 0.160743 \\
C5 & $k$-means  & 2.408 & 0.267161 & 0.111680 & 0.334185 & 0.231012 & 0.159149 \\
\midrule
C6 & first    & 1.550 & 0.593175 & 0.562680 & 0.750120 & 0.950702 & 0.807495 \\
C6 & $k$-means  & 3.511 & 0.566497 & 0.503485 & 0.709567 & 0.892490 & 0.718490 \\
\midrule
C7 & first    & 1.389 & 0.270088 & 0.115020 & 0.339146 & 0.234563 & 0.165507 \\
C7 & $k$-means  & 2.592 & 0.270696 & 0.116113 & 0.340753 & 0.237124 & 0.168015 \\
\midrule
C8 & first    & 1.335 & 0.583224 & 0.537355 & 0.733045 & 0.927608 & 0.777588 \\
C8 & $k$-means  & 1.395 & 0.581387 & 0.538477 & 0.733810 & 0.929836 & 0.774792 \\
\bottomrule
\end{tabular}%
}
\end{table}

To isolate the effect of the input design, we compare pairs with fixed $n$ and $I$ that differ only in the sampling design of $X$.
In low dimension ($I=20$), the design effect is small, with C5 and C7 yielding very similar metrics.
In high dimension ($I=80$), the effect depends on the anchor-selection strategy.
With first, the stratified design (C8) slightly improves over the uniform design (C6), whereas with \(k\)-means the opposite occurs, with C8 performing worse than C6.
Overall, the stratified design does not yield systematic gains its effect is secondary at $I=20$ and, at $I=80$, it interacts with the choice of anchors.
This pattern is consistent with the idea that, in more complex regimes, input-space coverage and the anchor-selection mechanism can interact to affect approximation quality.

\begin{table}[!ht]
\centering
\caption{Summary of the 20 replications of scenario C1 ($r=500$, \textit{$k$-means} anchors, $n_{\text{train}}=9000$).}
\label{tab:C1_reps_summary_50eps}
\renewcommand{\arraystretch}{1.10}
\footnotesize
\begin{tabularx}{\textwidth}{l >{\centering\arraybackslash}X >{\centering\arraybackslash}X}
\toprule
\textbf{Quantidade} & \textbf{Média (dp)} & \textbf{[min, max]} \\
\midrule
MAE        & 0.273514 (0.004816) & [0.263822, 0.282890] \\
RMSE       & 0.342126 (0.005893) & [0.331816, 0.353658] \\
MESE       & 0.239397 (0.004587) & [0.231336, 0.246951] \\
SDESE      & 0.167666 (0.005495) & [0.161552, 0.179720] \\
total\_s   & 2.486 (0.616)        & [1.829, 4.463] \\
$\sigma_{\varepsilon}^2$ & 0.08341 (0.00142) & [0.08033, 0.08700] \\
w          & 0.499987 (0.002367) & [0.494409, 0.504269] \\
alpha      & 0.509978 (0.001317) & [0.508133, 0.513169] \\
\(\sigma_b^2\)   & 1.028096 (0.001465) & [1.025079, 1.031158] \\
\(\sigma_v^2\)   & 0.975803 (0.006260) & [0.963308, 0.987962] \\
\(\sigma_u^2\)   & 0.982022 (0.006519) & [0.969905, 0.995776] \\
\(\sigma_a^2\)   & 0.980596 (0.007583) & [0.965003, 0.999919] \\
\bottomrule
\end{tabularx}
\end{table}

The results (Table~\ref{tab:C1_reps_summary_50eps}) show that test-set metrics exhibit non-negligible variability across replications (e.g., mean RMSE $\approx 0.342$ with sd $\approx 0.006$), while the MAP estimates of the parameters remain stable (e.g., $w \approx 0.5$ with low dispersion).
This suggests that Nyström-based fitting is numerically stable in scenario C1 and reinforces the importance of reporting performance as mean$\pm$sd, avoiding conclusions drawn from a single realization.
To contextualize the magnitude of this variability, note that the replications were conducted using \textit{$k$-means} anchors. The comparison between anchor-selection strategies (\textit{first} vs.\ \textit{$k$-means}) is presented separately for scenario C1 (Tables~\ref{tab:map_params_50eps_anchors}--\ref{tab:test_metrics_50eps_anchors}), where \textit{$k$-means} typically yields a slight error reduction at the cost of increased runtime due to the additional clustering step. Taken together, the replications indicate that the MAP estimates remain stable, while predictive performance varies in a manner consistent with the randomness of the latent process and observation noise.

Next, we evaluate the method on real datasets and examine its behavior in the large-scale regime.

\subsubsection{Real data}

To maintain comparability with the simulation scenarios and control computational cost, we use \textit{$k$-means} anchor selection for both the \textit{Superconductivity} dataset and the \textit{YearPredictionMSD} subsamples. For the full \textit{YearPredictionMSD} dataset, we instead adopt the \textit{first} strategy, aiming for improved scalability and lower preprocessing overhead.

The results in Tables~\ref{tab:real_map_params_50eps} and~\ref{tab:real_metrics_50eps} show that the MAP estimates are stable across the \textit{YearPredictionMSD} settings, in particular with $\hat{\alpha}\approx 0.488$ and $\hat{w}\approx 0.512$, indicating that the fit preserves a balanced mixture between the components of $K_{\text{mix}}$ and does not collapse to degenerate cases ($w\in\{0,1\}$). Since training is carried out on the standardized target scale, the \emph{nugget} is estimated in that scale; after de-standardization, its magnitude reflects the problem scale in the original target units (Table~\ref{tab:real_metrics_z}). In particular, we obtain $\hat{\sigma}_\varepsilon^2(z)\approx 4.4\times 10^{-4}$, corresponding to $\hat{\sigma}_\varepsilon(z)\approx 0.021$.

\begin{table}[!ht]
\centering
\caption{MAP estimates of the model parameters on the real-world datasets under $k$-means anchor selection.}
\label{tab:real_map_params_50eps}
\resizebox{\textwidth}{!}{%
\begin{tabular}{lrrrrrrrrr}
\toprule
Case & $n$ & $I$ & $\hat\sigma_b^2$ & $\hat\sigma_v^2$ & $\hat\sigma_u^2$ & $\hat\sigma_a^2$ & $\hat\alpha$ & $\hat w$ & $\hat\sigma_\varepsilon^2$ \\
\midrule
S1\_super\_full\_all & 21263 & 79 & 0.983783 & 1.031285 & 1.031180 & 0.969412 & 0.487748 & 0.512268 & 5.1759e-01 \\
R1\_year\_full\_10k  & 10000 & 90 & 1.028279 & 1.032023 & 1.031995 & 0.968947 & 0.487489 & 0.512480 & 5.4418e-02 \\
R2\_year\_full\_20k  & 20000 & 90 & 1.021539 & 1.031795 & 1.031790 & 0.969213 & 0.487546 & 0.512372 & 5.3020e-02 \\
R3\_year\_full\_50k  & 50000 & 90 & 1.027704 & 1.032048 & 1.032012 & 0.968841 & 0.487509 & 0.512499 & 5.3630e-02 \\
\bottomrule
\end{tabular}%
}
\end{table}

\begin{table}[!ht]
\centering
\caption{Test-set performance metrics reported on the original target scale and total time on the real datasets.}
\label{tab:real_metrics_50eps}
\resizebox{\textwidth}{!}{%
\begin{tabular}{llrrrrrrr}
\toprule
Case & Dataset & $n$ & $I$ & Total time (s) & MAE & RMSE & MESE & SDESE \\
\midrule
S1\_super\_full\_all & superconduct & 21263 & 79 & 1.521 & 11.996 & 18.464 & 347.035 & 808.186 \\
R1\_year\_full\_10k  & yearpredictionmsd & 10000 & 90 & 2.141 & 8.151 & 11.346 & 129.177 & 276.568 \\
R2\_year\_full\_20k  & yearpredictionmsd & 20000 & 90 & 1.593 & 8.483 & 11.578 & 134.426 & 272.604 \\
R3\_year\_full\_50k  & yearpredictionmsd & 50000 & 90 & 2.266 & 8.551 & 11.614 & 135.163 & 292.555 \\
\bottomrule
\end{tabular}%
}
\end{table}

\begin{table}[!ht]
\centering
\caption{Scaled metrics normalized by the training-target standardization, relative to Table~\ref{tab:real_metrics_50eps}.}
\label{tab:real_metrics_z}
\footnotesize
\begin{tabularx}{\textwidth}{l*{5}{>{\centering\arraybackslash}X}}
\toprule
Case & $\mathrm{sd}(y_{\text{train}})$ & MAE$_z$ & RMSE$_z$ & $\hat\sigma_{\varepsilon}^2(z)$ & $\hat\sigma_{\varepsilon}^2(\text{orig})$\\
\midrule
S1\_super\_full\_all & 34.231 & 0.350 & 0.539 & 4.417e-04 & 5.1759e-01 \\
R1\_year\_full\_10k  & 11.104 & 0.734 & 1.022 & 4.414e-04 & 5.4418e-02 \\
R2\_year\_full\_20k  & 10.957 & 0.774 & 1.057 & 4.416e-04 & 5.3020e-02 \\
R3\_year\_full\_50k  & 11.022 & 0.776 & 1.054 & 4.415e-04 & 5.3630e-02 \\
\bottomrule
\end{tabularx}
\end{table}

Predictively, we observe only marginal changes as $n$ increases in \textit{YearPredictionMSD} while keeping $r=500$: RMSE remains around $11.35$--$11.61$ (original scale), suggesting that, under a fixed-rank approximation and a fixed training budget, part of the error becomes dominated by the low-rank Nystr\"om approximation. Even so, computational cost remains low (total time $\approx 1.5$--$2.3$ s per setting), highlighting scalability and reproducibility on large tabular datasets.

Finally, because the targets are measured in different natural units (years in \textit{YearPredictionMSD} and \textit{Kelvin} in \textit{Superconductivity}), absolute MAE/RMSE values on the original scale are not directly comparable across datasets. Therefore, in addition to reporting metrics on the original scale, we also report versions normalized by the training-target scale (MAE$_z$, RMSE$_z$) in Table~\ref{tab:real_metrics_z}, which allow comparisons of relative error magnitudes across settings and datasets.

\begin{table}[!ht]
\centering
\caption{Full YearPredictionMSD: MAP estimates of the model parameters under Nyström, using \textit{first} anchors, for different ranks $r$.}
\label{tab:year_full_map_params_gridN}
\renewcommand{\arraystretch}{0.90} 
\setlength{\tabcolsep}{11pt}
\resizebox{\textwidth}{!}{%
\begin{tabular}{rrrrrrrr}
\toprule
$r$ & $\hat{\sigma}_b^2$ & $\hat{\sigma}_v^2$ & $\hat{\sigma}_u^2$ & $\hat{\sigma}_a^2$ & $\hat{\alpha}$ & $\hat{w}$ & $\hat{\sigma}^2_{\varepsilon}$ \\
\midrule
1000 & 0.999745 & 1.031698 & 1.031808 & 0.968802 & 0.487477 & 0.512471 & $\,5.280451\times 10^{-2}\,$ \\
1500 & 1.001928 & 1.030765 & 1.031028 & 0.969650 & 0.487674 & 0.512086 & $\,5.281479\times 10^{-2}\,$ \\
2000 & 1.000840 & 1.031140 & 1.031370 & 0.969235 & 0.487561 & 0.512327 & $\,5.281858\times 10^{-2}\,$ \\
2500 & 1.000225 & 1.031369 & 1.031605 & 0.968976 & 0.487469 & 0.512463 & $\,5.281404\times 10^{-2}\,$ \\
3000 & 0.999901 & 1.031510 & 1.031739 & 0.968839 & 0.487441 & 0.512496 & $\,5.282333\times 10^{-2}\,$ \\
3500 & 0.999993 & 1.031638 & 1.031865 & 0.968724 & 0.487422 & 0.512520 & $\,5.282105\times 10^{-2}\,$ \\
4000 & 1.000136 & 1.031792 & 1.032011 & 0.968610 & 0.487406 & 0.512538 & $\,5.282208\times 10^{-2}\,$ \\
4500 & 0.999972 & 1.017239 & 1.018181 & 0.978938 & 0.492810 & 0.523093 & $\,5.280713\times 10^{-2}\,$ \\
5000 & 1.000011 & 1.022378 & 1.022732 & 0.973577 & 0.490332 & 0.518135 & $\,5.280737\times 10^{-2}\,$ \\
\bottomrule
\end{tabular}%
}
\end{table}

At this stage, we evaluate the GP model with the mixed kernel $K_{\text{mix}}$ on the full \textit{YearPredictionMSD} dataset (UCI), using a random split with $10\%$ held out for testing and the same preprocessing adopted in the other real-data experiments: min--max rescaling of $X$ to $[0,1]^I$ on the training set (and applied to the test set), centering $\tilde X=X-0.5$, and standardizing the target on the training set, with predictions de-standardized to report metrics on the original scale. We fit the model for each value of the rank $r$, keeping the anchor selection fixed to the \textit{first} strategy, in order to isolate the effect of $r$ on the quality of the Nyström approximation. We choose \textit{first} on the full dataset for scalability, since selecting anchors via \textit{$k$-means} would introduce an additional preprocessing cost that becomes significant at this scale.

\begin{table}[!ht]
\centering
\caption{Full YearPredictionMSD: test-set performance on the original target scale and total time under Nyström with \textit{first} anchors and different ranks $r$.}
\label{tab:year_full_metrics_gridN}
\footnotesize
\begin{tabularx}{\textwidth}{r*{5}{>{\centering\arraybackslash}X}}
\toprule
$r$ & Total time (s) & MAE & RMSE & MESE & SDESE \\
\midrule
1000 & 5.284   & 7.446 & 10.262 & 105.416 & 231.199 \\
1500 & 10.218  & 7.179 & 10.112 & 102.348 & 247.560 \\
2000 & 16.643  & 7.087 & 9.947  & 98.988  & 248.515 \\
2500 & 28.760  & 7.032 & 10.057 & 101.158 & 276.705 \\
3000 & 41.467  & 6.986 & 9.690  & 93.922  & 231.058 \\
3500 & 61.748  & 6.843 & 9.648  & 93.231  & 236.175 \\
4000 & 92.689  & 6.903 & 9.594  & 92.119  & 239.006 \\
4500 & 139.815 & 7.778 & 10.057 & 101.152 & 283.829 \\
5000 & 179.735 & 7.198 & 9.677  & 93.660  & 247.062 \\
\bottomrule
\end{tabularx}
\end{table}

\begin{table}[H]
\centering
\caption{Full YearPredictionMSD: metrics normalized by the training-target scale from Table~\ref{tab:year_full_metrics_gridN}.}
\label{tab:year_full_metrics_z_gridN}
\footnotesize
\begin{tabularx}{\textwidth}{r*{5}{>{\centering\arraybackslash}X}}
\toprule
$r$ & $\mathrm{sd}(y_{\text{train}})$ & MAE$_z$ & RMSE$_z$ & $\hat\sigma_{\varepsilon}^2(z)$ & $\hat\sigma_{\varepsilon}^2(\text{orig})$\\
\midrule
1000 & 10.936 & 0.681 & 0.938 & $\,4.414865\times 10^{-4}\,$ & $\,5.280451\times 10^{-2}\,$ \\
1500 & 10.936 & 0.656 & 0.925 & $\,4.415726\times 10^{-4}\,$ & $\,5.281479\times 10^{-2}\,$ \\
2000 & 10.936 & 0.648 & 0.910 & $\,4.416462\times 10^{-4}\,$ & $\,5.281858\times 10^{-2}\,$ \\
2500 & 10.936 & 0.643 & 0.920 & $\,4.415373\times 10^{-4}\,$ & $\,5.281404\times 10^{-2}\,$ \\
3000 & 10.936 & 0.639 & 0.886 & $\,4.416178\times 10^{-4}\,$ & $\,5.282333\times 10^{-2}\,$ \\
3500 & 10.936 & 0.626 & 0.882 & $\,4.416175\times 10^{-4}\,$ & $\,5.282105\times 10^{-2}\,$ \\
4000 & 10.936 & 0.631 & 0.877 & $\,4.416362\times 10^{-4}\,$ & $\,5.282208\times 10^{-2}\,$ \\
4500 & 10.936 & 0.711 & 0.920 & $\,4.414865\times 10^{-4}\,$ & $\,5.280713\times 10^{-2}\,$ \\
5000 & 10.936 & 0.658 & 0.885 & $\,4.415092\times 10^{-4}\,$ & $\,5.280737\times 10^{-2}\,$ \\
\bottomrule
\end{tabularx}
\end{table}

Tables~\ref{tab:year_full_map_params_gridN}--\ref{tab:year_full_metrics_gridN} highlight two main findings. First, the MAP estimates of the $K_{\text{mix}}$ parameters are stable as $r$ varies. In particular, $\hat{\alpha}\approx 0.487$ and $\hat{w}\approx 0.512$ remain nearly constant, indicating that the model preserves a balanced mixture of the components and does not collapse to the degenerate cases ($w\in\{0,1\}$).

Similarly, the estimated \emph{nugget} remains essentially constant across the grid of $r$, showing variation only in the 5th--6th decimal place when reported with higher precision. On the full \textit{YearPredictionMSD} dataset, $\hat\sigma_\varepsilon^2(\text{orig})$ stays within $[5.280451,\,5.282333]\times 10^{-2}$, which corresponds to a relative variation of approximately $0.036\%$ (Table~\ref{tab:year_full_metrics_z_gridN}). This stability is consistent with interpreting $\sigma_\varepsilon^2$ as i.i.d.\ noise: as $r$ changes, it primarily affects the Nyström approximation's ability to represent the correlated component of the latent process, while the estimated observational noise level remains essentially unchanged. Residual differences can be attributed to numerical/optimization effects and rounding.


In addition to assessing the stability of the fit as \(r\) varies, Table~\ref{tab:year_full_metrics_gridN} can be interpreted operationally as a direct guide for choosing the Nyström rank under a total time budget \(T\) (training + prediction). For a fixed budget, a simple rule is to select \(r_{\max}(T)\), defined as the largest rank in the grid such that \(\mathrm{time}(r)\le T\), which makes explicit the cost of increasing the capacity of the approximator. For instance, under \(T=30\)s, the largest feasible rank in the grid is \(r=2500\) (28.760s), with MAE \(=7.032\) and RMSE \(=10.057\); under \(T=60\)s, \(r_{\max}(T)=3000\) (41.467s), with MAE \(=6.986\) and RMSE \(=9.690\); and under \(T=300\)s, \(r_{\max}(T)=5000\) (179.735s), with MAE \(=7.198\) and RMSE \(=9.677\).

At the same time, the error is not strictly monotone in \(r\): some intermediate values perform worse than smaller ranks (e.g., \(r=2500\) compared with \(r=2000\)), and there is also a point with a marked degradation (\(r=4500\)). This behavior is consistent with numerical variability and sensitivity of training/optimization when combined with a low-rank approximation, and it reinforces the trade-off interpretation with diminishing returns: increasing \(r\) raises cost, but does not guarantee an immediate improvement in MAE/RMSE.

Therefore, a second practical rule is to choose, within each budget \(T\), the rank \(r_{\text{best}}(T)\) that minimizes the test RMSE. Table~\ref{tab:budget_r_choice} summarizes both choices. In particular, for \(T=60\)s, \(r=3000\) emerges as a competitive point by combining low cost with reduced RMSE; for a more generous budget (\(T=300\)s), the lowest RMSE in the grid occurs at \(r=4000\) (92.689s), still well below the cost of \(r=5000\), indicating that additional gains beyond \(r\in[3000,4000]\) tend to be marginal relative to the increase in runtime.

\begin{table}[!ht]
\centering
\caption{Choice of \(r\) on the full YearPredictionMSD dataset under total runtime budgets \(T\). We report \(r_{\max}(T)\) (the largest \(r\) such that \(\mathrm{time}(r)\le T\)) and, additionally, the \(r\) that yields the lowest RMSE within the same budget.}
\label{tab:budget_r_choice}
\footnotesize
\setlength{\tabcolsep}{6pt} 
\renewcommand{\arraystretch}{1.15}
\begin{tabular*}{\linewidth}{@{\extracolsep{\fill}} l r r r r r r @{}}
\toprule
Roof \(T\) & \(r_{\max}(T)\) & Total time (s) & MAE & RMSE & \(r_{\text{best}}(T)\) & RMSE\(_{\text{best}}\) \\
\midrule
30s  & 2500 & 28.760  & 7.032 & 10.057 & 2000 & 9.947 \\
60s  & 3000 & 41.467  & 6.986 & 9.690  & 3000 & 9.690 \\
300s & 5000 & 179.735 & 7.198 & 9.677  & 4000 & 9.594 \\
\bottomrule
\end{tabular*}
\end{table}

In summary, the full-dataset study confirms the Nyström rank as the main driver of the cost--accuracy trade-off: larger $r$ values tend to reduce error up to a saturation regime, while the kernel parameters and the nugget remain essentially stable. Considering accuracy and computational cost jointly, $r=4000$ yields the lowest RMSE among the evaluated values, although configurations such as $r\in[3000,4000]$ are also competitive at a substantially lower cost than $r=5000$.

Overall, on the real datasets, MAP estimates remain consistent across sample sizes, and performance confirms that the scalable approximation is the primary source of the time--error trade-off. In the large-scale study, the approximation complexity directly controls this trade-off, with gains that tend to saturate beyond an intermediate range and improvements concentrated in intermediate regimes.

\section{Conclusion}
\label{sec:conclusao}
This work strengthens the connection between infinite-width limits of BNNs and GPs, and leverages this bridge to propose and evaluate a mixed kernel \(K_{\text{mix}}\), interpreted as a combination of two complementary process behaviors. From a theoretical standpoint, the formulation establishes conditions that guarantee fundamental properties of the proposed class and discusses regimes of non-identifiability in finite samples, motivating a reduced parameterization that is more stable and interpretable. From a computational standpoint, we develop a scalable procedure based on MAP estimation with a \textit{Nyström} approximation, enabling efficient training and prediction with explicit uncertainty quantification.

Empirically, simulation studies and experiments on real-world datasets support three main findings: (i) MAP fitting is stable across scenarios, with the mixing parameter \(w\) consistently estimated and showing no systematic collapse to degenerate cases; (ii) anchor selection in the \textit{Nyström} approximation affects the cost--accuracy trade-off, with more noticeable gains in more complex regimes; and (iii) at scale, the approximation rank directly governs the time--performance trade-off, with diminishing returns beyond an intermediate range. These results indicate that the mixing in \(K_{\text{mix}}\) provides practical robustness by avoiding losses associated with fixed \emph{kernel} choices, while retaining a small number of parameters with clear probabilistic interpretation.

Despite these advances, the study has limitations. First, the experimental evaluation was conducted over a finite set of scenarios and configuration choices, so other regimes may exhibit different behavior. Second, we focus on univariate regression with \emph{i.i.d.} Gaussian noise, which limits immediate applicability to multivariate, heteroscedastic, or classification settings. Third, while \textit{Nyström} provides scalability, performance depends both on anchor selection and on the complexity of the low-rank approximation, and a systematic sensitivity analysis of these factors was not explored exhaustively.

As future work, we aim to extend the model to multivariate outputs and non-Gaussian likelihoods, including classification. We also plan to investigate hierarchical variants inspired by the BNN \(\rightarrow\) GP bridge to capture local heterogeneities, broaden empirical validation across additional domains and scaling regimes with more systematic sensitivity analyses, and compare alternative scalable approximations and inference strategies beyond MAP, with the goal of improving calibration and robustness under controlled computational budgets.

\section*{Acknowledgments}
We thank PPGEst--UFMG for a collaborative research environment and UEMG for institutional support. This work was supported by CNPq, FAPEMIG, and CAPES.

\bibliographystyle{abntex2-alf}
\bibliography{references} 

\appendix

\section{Technical details and proofs}
\label{apx:technical_details}

\subsection{Pre-activation moments and dependence across inputs}
\label{apx:preactivation_moments}

For a fixed hidden unit \(j\), consider the pre-activations
\(
z(\mathbf{x}) = a + \mathbf{u}^\top \mathbf{x},
z(\mathbf{x}') = a + \mathbf{u}^\top \mathbf{x}',
\)
where \(a\) and \(\mathbf{u}\) are drawn from the parameter priors (with zero mean and finite variances).
Under the assumptions in the main text, we have:

\paragraph{Mean.}
\[
\mathbb{E}[z(\mathbf{x})] = \mathbb{E}[a] + \sum_{i=1}^I x_i \mathbb{E}[u_i] = 0,
\qquad
\mathbb{E}[z(\mathbf{x}')] = 0.
\]

\paragraph{Variance.}
Assuming independence between \(a\) and \(\mathbf{u}\), and finite variances,
\[
\mathrm{Var}\big(z(\mathbf{x})\big)
= \mathrm{Var}(a) + \mathrm{Var}(\mathbf{u}^\top \mathbf{x})
= \sigma_a^2 + \sigma_u^2 \,\mathbf{x}^\top \mathbf{x},
\]
and similarly \(\mathrm{Var}\big(z(\mathbf{x}')\big)=\sigma_a^2 + \sigma_u^2 \,\mathbf{x}'^\top \mathbf{x}'\).

\paragraph{Covariance.}
Note that \(z(\mathbf{x})\) and \(z(\mathbf{x}')\) are \emph{not} independent: they share the same \(a\) and the same \(\mathbf{u}\).
Their covariance is
\[
\mathrm{Cov}\big(z(\mathbf{x}),z(\mathbf{x}')\big)
= \mathrm{Var}(a) + \mathrm{Cov}(\mathbf{u}^\top\mathbf{x},\mathbf{u}^\top\mathbf{x}')
= \sigma_a^2 + \sigma_u^2 \,\mathbf{x}^\top\mathbf{x}'.
\]
In particular, when \(a\) and \(\mathbf{u}\) are Gaussian, \((z(\mathbf{x}),z(\mathbf{x}'))\) is a centered bivariate Gaussian
with covariance matrix determined by the quantities above.

\subsection{A general regularity condition}
\label{apx:regularity}

We next state a simple sufficient condition ensuring \(\mathbb{E}[h(Z)^2]<\infty\) when \(Z\) is Gaussian.

\begin{proposition}[Polynomial growth implies a finite second moment]
\label{prop:second_moment}
If \(h:\mathbb{R}\to\mathbb{R}\) satisfies \(|h(z)|\le C(1+|z|^p)\) for some \(C>0\) and \(p\ge0\),
then for \(Z\sim\mathcal{N}(0,\sigma^2)\) we have \(\mathbb{E}[h(Z)^2]<\infty\).
\end{proposition}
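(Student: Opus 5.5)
The plan is to bound $h(z)^2$ by a polynomial in $|z|$ and then use the fact that every absolute moment of a centered Gaussian is finite.

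First, from $|h(z)|\le C(1+|z|^p)$ and the elementary inequality $(s+t)^2\le 2s^2+2t^2$, we get the pointwise bound
\[
h(z)^2 \le C^2(1+|z|^p)^2 \le 2C^2\bigl(1+|z|^{2p}\bigr), \qquad z\in\mathbb{R}.
\]
Taking expectations, which is legitimate since both sides are nonnegative and measurable, gives
\[
\mathbb{E}[h(Z)^2]\le 2C^2\bigl(1+\mathbb{E}|Z|^{2p}\bigr).
\]
It therefore suffices to show $\mathbb{E}|Z|^{q}<\infty$ for $q=2p\ge 0$.

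Second, I will verify this moment bound directly. If $\sigma=0$ it is trivial. Otherwise write $Z=\sigma W$ with $W\sim\mathcal N(0,1)$, so that $\mathbb{E}|Z|^q=\sigma^q\,\mathbb{E}|W|^q$. Moreover,
\[
\mathbb{E}|W|^q=\frac{2}{\sqrt{2\pi}}\int_0^\infty w^q e^{-w^2/2}\,dw .
\]
The substitution $t=w^2/2$ turns this into
\[
\mathbb{E}|W|^q=\frac{2^{q/2}}{\sqrt{\pi}}\,\Gamma\!\left(\tfrac{q+1}{2}\right),
\]
which is finite because $(q+1)/2>0$.

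As an alternative that avoids the Gamma function, one can note that $w^q e^{-w^2/4}$ is bounded on $[0,\infty)$. Hence the integrand is dominated by a constant multiple of $e^{-w^2/4}$, which is integrable.

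The only step needing care is this moment computation, which is routine. I would remark after the proof that the condition covers $\tanh$ and sigmoid (take $p=0$) and ReLU/LeakyReLU (take $p=1$). Since the pre-activations in Appendix~\ref{apx:preactivation_moments} are centered Gaussians with variance $\sigma_a^2+\sigma_u^2\|\mathbf x\|^2$, this yields \eqref{eq:second_moment_condition_main}. The product $h(Z)h(Z')$ is then integrable by Cauchy--Schwarz, so the kernel in Theorem~\ref{thm:convergence_main} is well defined.
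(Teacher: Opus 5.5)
Your proof is correct and follows the paper's argument: the pointwise bound $h(z)^2\le 2C^2(1+|z|^{2p})$, followed by finiteness of Gaussian absolute moments. The only difference is that you verify $\mathbb{E}|Z|^{2p}<\infty$ explicitly, via the Gamma-function formula or the domination $w^q e^{-w^2/2}\le M e^{-w^2/4}$, whereas the paper simply cites it as a standard fact.
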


\begin{proof}
By assumption, \(h(Z)^2 \le 2C^2(1+|Z|^{2p})\). Since Gaussian random variables have moments of all orders,
\(\mathbb{E}[|Z|^{2p}]<\infty\). Hence,
\(\mathbb{E}[h(Z)^2] \le 2C^2(1+\mathbb{E}[|Z|^{2p}])<\infty\).
\end{proof}

\subsection{Proof of Theorem~\ref{thm:convergence_main}}
\label{apx:proof_thm}


For a fixed \(k\), the function
\(
f_k(\mathbf{x}) = b_k + \sum_{j=1}^{H} v_{kj}\, h\!\big(z_j(\mathbf{x})\big)
\)
is the sum of the bias term \(b_k\) and \(H\) terms of the form \(v_{kj}h(z_j(\mathbf{x}))\). Under the the prior independence assumptions, the terms \(v_{kj}h(z_j(\mathbf{x}))\) are independent across \(j\).

Define \(v_{kj}^{*} = \sqrt{H}\,v_{kj}\), so that \(v_{kj}^{*} \sim \mathcal{N}(0,\sigma_v^2)\). Then, by the CLT,
\[
\sqrt{H} \sum_{j=1}^{H} \frac{v_{kj}^{*}\, h(z_j(\mathbf{x}))}{H}
\;\Longrightarrow\;
\mathcal{N}\!\bigl(0,\,\sigma_v^2\,\mathbb{E}[h(z_j(\mathbf{x}))^2]\bigr).
\]

Independently of the specific distributions of \(a_j\), \(u_{ji}\), and \(v_{kj}\), as long as each term
\(v_{kj}h(z_j(\mathbf{x}))\) has mean zero and finite variance, the sum \(\sum_{j=1}^{H} v_{kj} h(z_j(\mathbf{x}))\)
remains a sum of \(H\) independent terms, due to the independence of \(v_{kj}\) and \(z_j\). Thus, for each \(\mathbf{x}\),
\begin{align}
\mathbb{E}[f_k(\mathbf{x})]
= \mathbb{E}[b_k] + \sum_{j=1}^{H} \mathbb{E}[v_{kj}]\,\mathbb{E}[h(z_j(\mathbf{x}))]
&= 0 + \sum_{j=1}^{H} 0 \cdot \mathbb{E}[h(z_j(\mathbf{x}))] = 0, \notag
\end{align}
and
\begin{align}
\mathrm{Var}[f_k(\mathbf{x})]
= \mathrm{Var}[b_k] + \sum_{j=1}^{H} \mathrm{Var}\!\left(v_{kj} h(z_j(\mathbf{x}))\right) = \sigma_b^2 + \sum_{j=1}^{H} \frac{\sigma_v^2}{H}\,\mathbb{E}[h(z_j(\mathbf{x}))^2] = \sigma_b^2 + \sigma_v^2\,\mathbb{E}[h(z_j(\mathbf{x}))^2]. \notag
\end{align}
Therefore,
\[
f_k(\mathbf{x})
\;\xRightarrow{d}\;
\mathcal{N}\!\Bigl(0,\,\sigma_b^2 + \sigma_v^2\,\mathbb{E}[h(z_j(\mathbf{x}))^2]\Bigr).
\]

Note that one must verify that second-order moments are finite; if \(h\) satisfies Proposition~\ref{prop:second_moment},
then the theorem’s assumptions guarantee that \(\mathrm{Var}[f_k(\mathbf{x})] < \infty\).

Now consider \(n\) input points \((\mathbf{x}_1,\mathbf{x}_2,\ldots,\mathbf{x}_n)\).
For each \(p\in\{1,\ldots,n\}\), \(f_k(\mathbf{x}_p)\) is a sum of terms \(v_{kj}h(z_j(\mathbf{x}_p))\) plus a bias, so
\(f_k(\mathbf{x}_p)\sim \mathcal{N}\!\bigl(0,\sigma_b^2+\sigma_v^2\mathbb{E}[h(z_j(\mathbf{x}_p))^2]\bigr)\), i.e.,
\(\mathbf{f}\sim \mathcal{N}_n(\mathbf{0},\mathbf{K})\).
Note that for different \(\mathbf{x}_p\) and \(\mathbf{x}_{p'}\), the terms \(v_{kj}h(z_j(\mathbf{x}_p))\) and
\(v_{kj}h(z_j(\mathbf{x}_{p'}))\) are correlated because they share the same \(v_{kj}\). Moreover, since \(v_{kj}\) and
\(h(z_j(\mathbf{x}_{p'}))\) are independent,
\begin{align}
\mathrm{Cov}[f_k(\mathbf{x}_p), f_k(\mathbf{x}_{p'})]
&= \mathbb{E}[f_k(\mathbf{x}_p) f_k(\mathbf{x}_{p'})] - \mathbb{E}[f_k(\mathbf{x}_p)] \mathbb{E}[f_k(\mathbf{x}_{p'})] \notag \\
&= \mathbb{E}[f_k(\mathbf{x}_p) f_k(\mathbf{x}_{p'})] - 0 \notag \\
&= \mathrm{Var}[b_k] + \sum_{j=1}^{H} \mathrm{Cov}\!\left(v_{kj} h(z_j(\mathbf{x}_p)),\, v_{kj} h(z_j(\mathbf{x}_{p'}))\right). \notag
\end{align}
Since \(v_{kj}\) are independent across different \(j\),
\begin{align}
\sum_{j=1}^{H} \mathrm{Cov}\!\left(v_{kj} h(z_j(\mathbf{x}_p)), v_{kj} h(z_j(\mathbf{x}_{p'}))\right)
&= \sum_{j=1}^{H} \mathbb{E}[v_{kj}^2]\, \mathbb{E}[h(z_j(\mathbf{x}_p)) h(z_j(\mathbf{x}_{p'}))] \notag \\
&= \sum_{j=1}^{H} \frac{\sigma_v^2}{H}\,\mathbb{E}[h(z_j(\mathbf{x}_p)) h(z_j(\mathbf{x}_{p'}))] \notag \\
&= \sigma_v^2\,\mathbb{E}[h(z(\mathbf{x}_p)) h(z(\mathbf{x}_{p'}))].
\end{align}
Therefore,
\begin{align}
\mathrm{Cov}[f_k(\mathbf{x}_p), f_k(\mathbf{x}_{p'})]
&= \sigma_b^2 + \sigma_v^2\,\mathbb{E}[h(z(\mathbf{x}_p)) h(z(\mathbf{x}_{p'}))] \notag \\
&= \sigma_b^2 + \sigma_v^2 \int h(z(\mathbf{x}_p))\,h(z(\mathbf{x}_{p'})) \cdot p\!\big(z_j(\mathbf{x}_p),z_j(\mathbf{x}_{p'})\big)\,dz\,dz' \notag \\
&= \sigma_b^2 + \sigma_v^2\,C(\mathbf{x}_{p},\mathbf{x}_{p'}) \notag \\
&= K(\mathbf{x}_{p},\mathbf{x}_{p'}). \notag
\end{align}

The fact that, for any finite set of points \(\{\mathbf{x}_1,\mathbf{x}_2,\ldots,\mathbf{x}_n\}\), the vector
\((f_k(\mathbf{x}_1),\ldots,f_k(\mathbf{x}_n))\) converges to a multivariate Normal distribution implies that \(f_k\)
converges to a GP with zero mean and the covariance defined above, namely
\[
\mathbf{K} \;=\; \sigma_b^2 \,\mathbf 1_n\mathbf 1_n^\top \;+\; \sigma_v^2\,\mathbf{C}.
\]

\subsection{Proof of Proposition~\ref{prop:convergencia_mista}}
\label{apx:proof_prop_convergencia_mista}

\paragraph{Step 0 (setup and independence).}
We work under the \emph{additive (block) mixture construction} described in the main text. For each component
$m=1,\ldots,M$, the network uses a separate collection of output weights $(v^{(m)}_j)_{j=1}^H$, and we assume:
\[
v^{(m)}_j \ \text{are mutually independent in } j,\ \text{independent across } m,
\qquad
v^{(m)}_j \overset{\text{iid}}{\sim}\mathcal N\!\left(0,\frac{\sigma_v^2}{H}\right).
\]
The pre-activations are
\[
z_j(\mathbf x)=a_j+\mathbf u_j^\top \mathbf x,
\qquad j=1,\dots,H,
\]
where $(a_j,\mathbf u_j)$ are i.i.d. in $j$ and independent of all $v^{(m)}_j$. We also assume $b\sim\mathcal N(0,\sigma_b^2)$
independent of all other parameters.

\paragraph{Step 1 (network output).}
For a single-output network, the block-mixture prior is
\[
f(\mathbf x)
=
b+\sum_{m=1}^M \sqrt{w_m}\sum_{j=1}^H v^{(m)}_j\,h_m\!\bigl(z_j(\mathbf x)\bigr),
\qquad
w_m\in(0,1),\ \ \sum_{m=1}^M w_m=1.
\]
(For multiple outputs $k$, the argument applies componentwise to $f_k$.)

\paragraph{Step 2 (expand the covariance).}
For two inputs $\mathbf x_i,\mathbf x_{i'}$, by bilinearity of covariance,
\begin{align*}
\mathrm{Cov}\!\bigl[f(\mathbf x_i),f(\mathbf x_{i'})\bigr]
&= \mathrm{Var}(b) \\
&\quad+\mathrm{Cov}\!\left(\sum_{m=1}^M \sqrt{w_m}\sum_{j=1}^H v^{(m)}_j h_m(z_j(\mathbf x_i)),\;
\sum_{m'=1}^M \sqrt{w_{m'}}\sum_{\ell=1}^H v^{(m')}_\ell h_{m'}(z_\ell(\mathbf x_{i'}))\right).
\end{align*}

\paragraph{Step 3 (cross terms vanish).}
All terms with $m\neq m'$ vanish because the blocks $\{v^{(m)}_j\}$ and $\{v^{(m')}_\ell\}$ are independent and centered.
Likewise, for fixed $m$, all terms with $j\neq \ell$ vanish because $v^{(m)}_j$ are independent across $j$ and centered.
Therefore,
\[
\mathrm{Cov}\!\bigl[f(\mathbf x_i),f(\mathbf x_{i'})\bigr]
=
\sigma_b^2
+
\sum_{m=1}^M w_m \sum_{j=1}^H 
\mathrm{Cov}\!\left(v^{(m)}_j h_m(z_j(\mathbf x_i)),\,v^{(m)}_j h_m(z_j(\mathbf x_{i'}))\right).
\]

\paragraph{Step 4 (factor each remaining covariance).}
Since $v^{(m)}_j$ is independent of $z_j(\cdot)$ and $\mathbb E[v^{(m)}_j]=0$,
\[
\mathrm{Cov}\!\left(v^{(m)}_j h_m(z_j(\mathbf x_i)),\,v^{(m)}_j h_m(z_j(\mathbf x_{i'}))\right)
=
\mathbb E[(v^{(m)}_j)^2]\,
\mathbb E\!\big[h_m(z_j(\mathbf x_i))\,h_m(z_j(\mathbf x_{i'}))\big].
\]
With the scaling $\mathbb E[(v^{(m)}_j)^2]=\sigma_v^2/H$, we obtain
\[
\mathrm{Cov}\!\bigl[f(\mathbf x_i),f(\mathbf x_{i'})\bigr]
=
\sigma_b^2
+
\sum_{m=1}^M w_m \sum_{j=1}^H 
\frac{\sigma_v^2}{H}\,
\mathbb E\!\big[h_m(z_j(\mathbf x_i))\,h_m(z_j(\mathbf x_{i'}))\big].
\]

\paragraph{Step 5 (simplify the sum over hidden units).}
Because $(a_j,\mathbf u_j)$ are i.i.d. in $j$, the pair $\bigl(z_j(\mathbf x_i),z_j(\mathbf x_{i'})\bigr)$ has the same distribution
for every $j$, so
\[
\mathbb E\!\big[h_m(z_j(\mathbf x_i))\,h_m(z_j(\mathbf x_{i'}))\big]
=
\mathbb E\!\big[h_m(Z)\,h_m(Z')\big]
=: K_m(\mathbf x_i,\mathbf x_{i'}),
\]
where $(Z,Z')$ denotes the pre-activation pair induced by the priors at inputs $(\mathbf x_i,\mathbf x_{i'})$.
Thus,
\[
\sum_{j=1}^H 
\frac{\sigma_v^2}{H}\,
\mathbb E\!\big[h_m(z_j(\mathbf x_i))\,h_m(z_j(\mathbf x_{i'}))\big]
=
H\cdot \frac{\sigma_v^2}{H}\,K_m(\mathbf x_i,\mathbf x_{i'})
=
\sigma_v^2\,K_m(\mathbf x_i,\mathbf x_{i'}).
\]
Substituting back yields the kernel form
\[
\mathrm{Cov}\!\bigl[f(\mathbf x_i),f(\mathbf x_{i'})\bigr]
=
\sigma_b^2 + \sigma_v^2 \sum_{m=1}^M w_m\,K_m(\mathbf x_i,\mathbf x_{i'}).
\]

\paragraph{Step 6 (GP limit and positive semidefiniteness).}
For any finite set of inputs, $f(\mathbf x)$ is a sum (over $j$) of independent terms with variance controlled by the
$1/H$ scaling. Under the same moment assumptions as in Theorem~\ref{thm:convergence_main} (e.g., $\mathbb E[h_m(Z)^2]<\infty$),
the multivariate CLT implies that the finite-dimensional distributions converge to a multivariate Gaussian with covariance matrix
built from
\[
K(\mathbf x,\mathbf x')=\sigma_b^2 + \sigma_v^2 \sum_{m=1}^M w_m\,K_m(\mathbf x,\mathbf x').
\]
Hence, as $H\to\infty$, $f(\cdot)$ converges (in the sense of finite-dimensional distributions) to a GP with covariance $K$.
Finally, since each $K_m$ is symmetric positive semidefinite and $w_m\ge 0$, the kernel $K$ is also symmetric positive semidefinite.

\subsection{Proof of Proposition~\ref{prop:ident_mix}}
\label{apx:ident_mix_proof}

\begin{proof}
Suppose there exist $\theta_1\neq\theta_2$ in $\Theta$ such that
\[
K_{\text{mix}}(\mathbf{x},\mathbf{x}';\theta_1)\equiv K_{\text{mix}}(\mathbf{x},\mathbf{x}';\theta_2),
\qquad \forall\,(\mathbf{x},\mathbf{x}')\in\mathbb{R}^d\times\mathbb{R}^d.
\]
Write
\[
K_{\text{mix}}(\mathbf{x},\mathbf{x}';\theta)
=\sigma_b^2+\sigma_v^2\Big[w\,A(\mathbf{x},\mathbf{x}')
+(1-w)\,B(\mathbf{x},\mathbf{x}';\alpha)\Big],
\]
where $A$ denotes the smooth (arcsine) term and $B(\cdot;\alpha)$ denotes the angular (ReLU/LeakyReLU) term, both depending on
$(\mathbf{x},\mathbf{x}')$ through $\rho(\mathbf{x},\mathbf{x}')$ and through $\sigma_z(\mathbf{x}),\sigma_{z'}(\mathbf{x}')$,
which are determined by $(\sigma_a^2,\sigma_u^2)$.

\paragraph{Step 1: identification of $\sigma_b^2$.}
From the identity $K_{\text{mix}}(\cdot,\cdot;\theta_1)\equiv K_{\text{mix}}(\cdot,\cdot;\theta_2)$ it follows that
\[
\sigma_{b,1}^2-\sigma_{b,2}^2
=
F_{\theta_2}(\mathbf{x},\mathbf{x}')-F_{\theta_1}(\mathbf{x},\mathbf{x}'),
\quad \forall(\mathbf{x},\mathbf{x}'),
\]
where $F_{\theta}:=\sigma_v^2\big[wA+(1-w)B(\cdot;\alpha)\big]$.
The left-hand side is constant. Fixing $\|\mathbf{x}\|=\|\mathbf{x}'\|=r$ and choosing different angles between $\mathbf{x}$ and $\mathbf{x}'$
(which yields different values of $\rho$), both $A$ and $B$ vary; hence the difference on the right-hand side cannot be constant over the whole domain
unless $F_{\theta_1}\equiv F_{\theta_2}$. Consequently, $\sigma_{b,1}^2=\sigma_{b,2}^2$ and
\[
F_{\theta_1}(\mathbf{x},\mathbf{x}')\equiv F_{\theta_2}(\mathbf{x},\mathbf{x}').
\]

\paragraph{Step 2: non-proportionality of $A$ and $B(\cdot;\alpha)$.}
Restrict attention to pairs with $\|\mathbf{x}\|=\|\mathbf{x}'\|=r$, so that $\sigma_z=\sigma_{z'}$ is fixed and $\rho$ varies in $(-1,1)$.
Within this family, $A(\rho)$ is real-analytic on $(-1,1)$ (since it involves $\arcsin$), whereas $B(\rho;\alpha)$ contains terms such as
$\sqrt{1-\rho^2}$ and $\arccos(\rho)$, with qualitatively different behavior as $|\rho|\to 1$.
Therefore, $A$ and $B(\cdot;\alpha)$ cannot be proportional as functions of $\rho$ over the entire interval $(-1,1)$.

\paragraph{Step 3: identification of $(\sigma_a^2,\sigma_u^2)$.}
Now consider orthogonal pairs $\mathbf{x}\perp\mathbf{x}'$ with $\|\mathbf{x}\|=\|\mathbf{x}'\|=r>0$.
Then
\[
\rho(r)=\frac{\sigma_a^2}{\sigma_a^2+\sigma_u^2 r^2},
\qquad
\sigma_z^2(r)=\sigma_{z'}^2(r)=\sigma_a^2+\sigma_u^2 r^2.
\]
The identity $F_{\theta_1}\equiv F_{\theta_2}$ for all $r$ implies that the functions $r\mapsto \rho(r)$ and $r\mapsto \sigma_z^2(r)$
coincide under $\theta_1$ and $\theta_2$. Since $\sigma_z^2(r)$ is affine in $r^2$ and $\rho(r)$ depends only on the pair
$(\sigma_a^2,\sigma_u^2)$ through a rational function of $r^2$, it follows that
\[
(\sigma_{a,1}^2,\sigma_{u,1}^2)=(\sigma_{a,2}^2,\sigma_{u,2}^2).
\]

\paragraph{Step 4: identification of $\sigma_v^2,w,\alpha$.}
With $(\sigma_a^2,\sigma_u^2)$ identified, the quantities $\rho(\mathbf{x},\mathbf{x}')$, $\sigma_z(\mathbf{x})$, and $\sigma_{z'}(\mathbf{x}')$
match on both sides. Since $A$ and $B(\cdot;\alpha)$ are not proportional in $\rho$, the equality
\(
F_{\theta_1}\equiv F_{\theta_2}
\)
forces the coefficients to coincide:
\[
\sigma_{v,1}^2=\sigma_{v,2}^2,\qquad w_1=w_2,\qquad \alpha_1=\alpha_2.
\]
Together with $\sigma_{b,1}^2=\sigma_{b,2}^2$, we obtain $\theta_1=\theta_2$, contradicting $\theta_1\neq\theta_2$.
Hence, the mapping $\theta\mapsto K_{\text{mix}}(\cdot,\cdot;\theta)$ is injective in the interior of $\Theta$.

In boundary cases ($w\in\{0,1\}$ or $\alpha\in\{0,1\}$), some parameters may drop out of the \textit{kernel}
and identifiability may fail; these cases are excluded by assumption.
\end{proof}

\end{document}